\documentclass{article} 
\usepackage{iclr2024_conference,times}


\usepackage{amsmath,amsfonts,bm}









\def\eqref#1{equation~\ref{#1}}









\def\1{\bm{1}}










\DeclareMathAlphabet{\mathsfit}{\encodingdefault}{\sfdefault}{m}{sl}
\SetMathAlphabet{\mathsfit}{bold}{\encodingdefault}{\sfdefault}{bx}{n}













\usepackage[utf8]{inputenc} 
\usepackage[T1]{fontenc}    
\usepackage{hyperref}       
\usepackage{url}            
\usepackage{booktabs}       
\usepackage{amsfonts}       
\usepackage{nicefrac}       
\usepackage{microtype}      
\usepackage{xcolor}         
\usepackage{amsmath,bm}
\usepackage[utf8]{inputenc} 
\usepackage[T1]{fontenc}    
\usepackage{hyperref}       
\usepackage{url}            
\usepackage{booktabs}       
\usepackage{amsfonts}       
\usepackage{nicefrac}       
\usepackage{graphics}
\usepackage{graphicx}
\usepackage{multirow}
\usepackage{multicol}
\usepackage{algorithm}
\usepackage{algorithmic}
\usepackage{amsthm}
\usepackage{amsmath}
\usepackage{amssymb}
\usepackage{mathtools}
\newtheorem{remark}{Remark}

\newtheorem{definition}{Definition}
\newtheorem{lemma}{Lemma}
\newtheorem{proposition}{Proposition}

\newcommand{\widgraph}[2]{\includegraphics[keepaspectratio,width=#1]{#2}}
\iclrfinalcopy

\title{Sliced Wasserstein Estimation with Control Variates}


\author{Khai Nguyen \& Nhat Ho \\
Department of Statistics and Data Sciences\\
The University of Texas at Austin\\
Austin, TX 78712, USA \\
\texttt{\{khainb,minhnhat\}@utexas.edu}
}

%

\begin{document}

\maketitle

\begin{abstract}
The sliced Wasserstein (SW) distances between two probability measures are defined as the expectation of the Wasserstein distance between two one-dimensional projections of the two measures. The randomness comes from a projecting direction that is used to project the two input measures to one dimension. Due to the intractability of the expectation, Monte Carlo integration is performed to estimate the value of the SW distance. Despite having various variants, there has been no prior work that improves the Monte Carlo estimation scheme for the SW distance in terms of controlling its variance. To bridge the literature on variance reduction and the literature on the SW distance, we propose computationally efficient control variates to reduce the variance of the empirical estimation of the SW distance. The key idea is to first find Gaussian approximations of projected one-dimensional measures, then we utilize the closed-form of the Wasserstein-2 distance between two Gaussian distributions to design the control variates. In particular, we propose using a lower bound and an upper bound of the Wasserstein-2 distance between two fitted Gaussians as two computationally efficient control variates. We empirically show that the proposed control variate estimators can help to reduce the variance considerably when comparing measures over images and point-clouds. Finally, we demonstrate the favorable performance of the proposed control variate estimators in gradient flows to interpolate between two point-clouds and in deep generative modeling on standard image datasets, such as CIFAR10 and CelebA\footnote{Code for the  paper is published at \url{https://github.com/khainb/CV-SW}.}.
\end{abstract}

\section{Introduction}
\label{sec:introduction}
Recent machine learning applications have recognized the Wasserstein distance~\citep{villani2008optimal,peyre2019computational}, as a universal effective tool. In particular, there are various applications that achieve notable performance by using the Wasserstein distance as a component, for example, (hierarchical) clustering~\citep{ho2017multilevel}, domain adaptation~\citep{courty2016optimal, damodaran2018deepjdot}, generative modeling~\citep{arjovsky2017wasserstein, tolstikhin2018wasserstein}, and so on. Despite its appealing performance, the Wasserstein distance has two major weaknesses. The first weakness is that it has high computational complexities. As discussed in~\citep{peyre2019computational}, the time complexity of computing the Wasserstein distance between two discrete measures that have at most $n$ supports is $\mathcal{O}(n^3 \log n)$. Additionally, the space complexity required for the pair-wise transportation cost matrix is $\mathcal{O}(n^2)$. As a result, computing the Wasserstein distance on a large-scale discrete distribution is expensive. The second weakness is that the Wasserstein distance suffers from the curse of dimensionality. More specifically, the sample complexity of the Wasserstein distance is $\mathcal{O}(n^{-1/d})$. Therefore, using the Wasserstein distance in high-dimensional statistical inference may not be stable.

The Wasserstein distance has a famous alternative called the sliced Wasserstein (SW) distance, which is derived from the one-dimensional Wasserstein distance as its base metric. The one-dimensional Wasserstein distance has a closed-form solution, which can be computed in $\mathcal{O}(n\log n)$ time complexity and $\mathcal{O}(n)$ space complexity in the discrete case. Here, ``supports" refers to the discrete probability measures being compared. To apply this closed-form solution in a high-dimension setting, random projections are employed, transforming two measures into infinite pairs of one-dimensional measures using a projecting function with infinite projecting directions that belong to the unit-hypersphere. The closed-form of the one-dimensional Wasserstein distance is then applied to pairs of one-dimensional measures to obtain projections, and the SW distance is computed by aggregating the values of projections. The most commonly used method of aggregation is averaging, resulting in an expectation of the projected one-dimensional Wasserstein distance with respect to the uniform distribution over projecting directions. Since the space of projecting directions is the unit-hypersphere, the SW distance does not suffer from the curse of dimensionality, and its sample complexity is $\mathcal{O}(n^{-1/2})$. Due to its scalability, the SW distance has been successfully applied in various applications such as point-cloud reconstruction~\citep{nguyen2023self,savkin2022lidar},    generative models~\citep{deshpande2018generative}, domain adaptation~\citep{lee2019sliced}, clustering~\citep{kolouri2018slicedgmm},  variational inference~\citep{yi2021sliced}, and many other tasks.

The SW distance is typically estimated using the Monte Carlo integration, as the intractable expectation requires approximation in practice. The number of Monte Carlo samples drawn from the unit-form distribution over the unit-hypersphere is referred to as the number of projections. Increasing the number of projections improves the accuracy and stability of the estimation. However, to the best of our knowledge, there is no prior work on improving the Monte Carlo approximation scheme of the SW distance, especially in terms of variance reduction. In this work, we propose a novel approach that combines the SW distance literature with the variance reduction literature by introducing the very first control variates, which are a technique for reducing the variance of Monte Carlo estimates.

\textbf{Contribution.} In summary, our contributions are two-fold:
1. To address the issue of variance in estimating the SW distance using Monte Carlo samples, we propose a novel approach based on control variates. Specifically, we first identify two Gaussian distributions that closely approximate the two projected one-dimensional measures by minimizing Kullback–Leibler divergence. Next, we construct control variates using the closed-form solution of the Wasserstein distance between two Gaussian distributions. We propose two control variates, an upper bound and a lower bound, for the Wasserstein distance between the fitted Gaussian distributions. By using these control variates, we show that the computation is only linear in terms of the number of supports and the number of dimensions, even when dealing with discrete probability measures. This means that the proposed control variates estimators have the same computational complexity as the conventional estimator of the SW distance. Overall, our proposed approach provides a practical and efficient method for estimating the SW distance with reduced variance.

2. We empirically show that the proposed control variate estimators yield considerably smaller variances than the conventional computational estimator of SW distance, using a finite number of projections, when comparing empirical distributions over images and point clouds. Moreover, we illustrate that the computation for control variates is negligible compared to the computation of the one-dimensional Wasserstein distance. Finally, we further demonstrate the favorable performance of the control variate approach in gradient flows between two point clouds, and in learning deep generative models on the CIFAR10~\citep{krizhevsky2009learning} and CelebA~\citep{liu2015faceattributes} datasets. From our experiments, we observe that using the proposed control variate estimators considerably improves performance, while their computation time is approximately the same as that of the conventional estimator.

\textbf{Organization.} The remainder of the paper is organized as follows. First, we review the background on the Wasserstein distance, including its special cases, such as the one-dimensional case and the Gaussian case, as well as the sliced Wasserstein distance and the Monte Carlo estimation scheme, in Section~\ref{sec:background}. Next, we discuss the construction of control variates and their resulting estimator of the SW distance in Section~\ref{sec:cvSW}. We then present experimental results in Section~\ref{sec:experiments} to demonstrate the benefits of the proposed control variate estimators. Finally, we conclude in Section~\ref{sec:conclusion} and provide proofs of key results, related works, and additional materials in the Appendices. 

\textbf{Notations.} For any $d \geq 2$, we denote $\mathbb{S}^{d-1}:=\{\theta \in \mathbb{R}^{d}\mid ||\theta||_2^2 =1\}$ as the unit hyper-sphere and $\mathcal{U}(\mathbb{S}^{d-1})$ as its corresponding uniform distribution. We denote $\mathcal{P}(\mathbb{S}^{d-1})$ as the set of all probability measures on $\mathbb{S}^{d-1}$. For $p\geq 1$, $\mathcal{P}_p(\mathbb{R}^{d})$ is the set of all probability measures on $\mathbb{R}^{d}$ that have finite $p$-moments. For any two sequences $a_{n}$ and $b_{n}$, the notation $a_{n} = \mathcal{O}(b_{n})$ means that $a_{n} \leq C b_{n}$ for all $n \geq 1$, where $C$ is some universal constant. We denote $\theta \sharp \mu$ as the push-forward measure of $\mu$ through the function $f:\mathbb{R}^{d} \to \mathbb{R}$, where $f(x) = \theta^\top x$. We denote $P_{X}$ as the empirical measure $\frac{1}{m} \sum_{i=1}^m \delta_{x_i}$, where $X :=(x_1,\ldots,x_m)\in \mathbb{R}^{dm}$ is a vector. We denote $\text{Tr}(A)$ as the trace operator of the matrix $A$ and $A^{1/2}$ is the square root of matrix $A$ i.e., $A^{1/2}=B$ such that $BB=A$. Given a probability measure $\mu$, we denote $F_{\mu}$  is  the cumulative
distribution function (CDF) of $\mu$.

\section{Background}
\label{sec:background}
We now review some essential materials for the paper.

\textbf{Wasserstein distance.}  Given $p\geq 1$,  two probability measures $\mu \in \mathcal{P}_p(\mathbb{R}^d)$ and $\nu \in \mathcal{P}_p(\mathbb{R}^d)$, the Wasserstein distance~\citep{Villani-09, peyre2019computational} between $\mu$ and $\nu$ is : 
    $\text{W}_p^p(\mu,\nu)  = \inf_{\pi \in \Pi(\mu,\nu)} \int_{\mathbb{R}^d \times \mathbb{R}^d} \| x - y\|_p^{p} d \pi(x,y),$
where $\Pi (\mu,\nu)$ is set of all couplings that have marginals are $\mu$ and $\nu$ respectively. The computational complexity and memory complexity of Wasserstein distance are  $\mathcal{O}(n^3 \log n)$ and $\mathcal{O}(n^2)$ in turn when $\mu$ and $\nu$ have at most $n$ supports. However, there are some special cases where the Wasserstein distance can be computed efficiently.

\textbf{Special Cases.} When $p=2$, we have Gaussian distributions $\mu :=  \mathcal{N}(\mathbf{m}_1,\Sigma_1)$ and $\nu :=  \mathcal{N}(\mathbf{m}_2,\Sigma_2)$, the Wasserstein distance between $\mu$ and $\nu$ has the following closed-form:
\begin{align}
    \label{eq:WG}
    \text{W}_2^2(\mu,\nu) = \|\mathbf{m}_1-\mathbf{m}_2\|_2^2  + \text{Tr}(\Sigma_1+\Sigma_2 - 2(\Sigma_1^{1/2}\Sigma_2 \Sigma_1^{1/2})^{1/2}).
\end{align}
When $d=1$, the Wasserstein distance between $\mu \in \mathcal{P}_p(\mathbb{R})$ and $\nu \in \mathcal{P}_p(\mathbb{R})$ can also be computed with a closed form:
    $\text{W}_p^p(\mu,\nu) = \int_0^1 |F_\mu^{-1}(z) - F_{\nu}^{-1}(z)|^{p} dz.$
 When $\mu$ and $\nu$ are one-dimension discrete probability measures that have a most $n$ supports, the computational complexity and memory complexity, in this case, are only  $\mathcal{O}(n \log n)$ and $\mathcal{O}(n)$.

\textbf{Sliced Wasserstein distance.} Using random projections, the sliced Wasserstein (SW) distance can exploit the closed-form benefit of Wasserstein distance in one dimension. The definition of sliced Wasserstein distance~\citep{bonneel2015sliced} between two probability measures $\mu \in \mathcal{P}_p(\mathbb{R}^d)$ and $\nu\in \mathcal{P}_p(\mathbb{R}^d)$ is:
\begin{align}
\label{eq:SW}
    \text{SW}_p^p(\mu,\nu)  = \mathbb{E}_{ \theta \sim \mathcal{U}(\mathbb{S}^{d-1})} [\text{W}_p^p (\theta \sharp \mu,\theta \sharp \nu)],
\end{align}
where $\mathcal{U}(\mathbb{S}^{d-1})$ is the uniform distribution over the unit-hyper sphere.

\textbf{Monte Carlo estimation.} The expectation in the SW is often intractable, hence, Monte Carlo samples are often used to estimate the expectation:
\begin{align}
\label{eq:MCSW}
    \widehat{\text{SW}}_{p}^p(\mu,\nu;L) = \frac{1}{L} \sum_{l=1}^L\text{W}_p^p (\theta_l \sharp \mu,\theta_l \sharp \nu),
\end{align}
where projecting directions $\theta_{1},\ldots,\theta_{L}$ are drawn i.i.d from $\mathcal{U}(\mathbb{S}^{d-1})$. When $\mu$ and $\nu$ are empirical measures that have at most $n$ supports in $d$ dimension, the time complexity of SW is $\mathcal{O}(L n \log n + Ldn)$. Here, $Ln\log n$ is for sorting $L$ sets of projected supports and $Ldn$ is for projecting supports to $L$ sets of scalars. Similarly, the space complexity for storing the projecting directions and the projected supports of SW  is $\mathcal{O}(Ld+Ln)$. We refer to Algorithm~\ref{alg:SW} in Appendix~\ref{sec:algorithms} for the detailed algorithm for the SW.

\textbf{Variance and Error.} By using some simple transformations, we can derive the variance of the Monte Carlo approximation of the SW distance as:
$    \text{Var}[\widehat{\text{SW}}_{p}^p(\mu,\nu;L)] = \frac{1}{L} \text{Var}[\text{W}_p^p (\theta \sharp \mu,\theta \sharp \nu)].$
We also have the error of the Monte Carlo approximation~\citep{nadjahi2020statistical} is:
\begin{align}
    \label{eq:error_SW}
    \mathbb{E}\left[ | \widehat{\text{SW}}_{p}^p(\mu,\nu;L) - \text{SW}_p^p(\mu,\nu) | \right] \leq \frac{1}{\sqrt{L}}\text{Var}\left[\text{W}_p^p (\theta \sharp \mu,\theta \sharp \nu)\right]^{1/2}.
\end{align}
Here, can see that $\text{Var}[\text{W}_p^p (\theta \sharp \mu,\theta \sharp \nu)$ plays an important role in controlling the approximation error. Therefore, a natural question arises: "Can we construct a function $Z(\theta;\mu,\nu)$ such that $\mathbb{E}[Z(\theta;\mu,\nu)]=\text{SW}_p^p(\mu,\nu) $ while $\text{Var}[Z(\theta;\mu,\nu)] \leq\text{Var}[\text{W}_p^p (\theta \sharp \mu,\theta \sharp \nu)$?". If we can design such $Z(\theta;\mu,\nu)$, we will have $\mathbb{E}\left[ \left| \frac{1}{L}\sum_{l=1}^L Z(\theta_l;\mu,\nu) - \text{SW}_p^p(\mu,\nu) \right| \right] \leq \frac{1}{\sqrt{L}}\text{Var}[Z(\theta;\mu,\nu)]^{1/2} \leq\frac{1}{\sqrt{L}}\text{Var}[\text{W}_p^p (\theta \sharp \mu,\theta \sharp \nu)]^{1/2}$ which implies that the estimator $\frac{1}{L}\sum_{l=1}^L Z(\theta_l;\mu,\nu)$ is better than $\widehat{\text{SW}}_{p}^p(\mu,\nu;L)$.

\section{Control Variate Sliced Wasserstein Estimators}
\label{sec:cvSW}
We first adapt notations from the control variates literature to the SW case in Section~\ref{subsec:control_variate}. After that, we discuss how to construct control variates and their computational properties in Section~\ref{subsec:constructing}.
\subsection{Control Variate for Sliced Wasserstein Distance}
\label{subsec:control_variate}
\textbf{Short Notations.} We are approximating the SW which is an expectation with respect to the random variable $\theta$. For convenience, let denote $\text{W}_p^p(\theta \sharp \mu,\theta \sharp \nu)$ as $W(\theta;\mu,\nu)$, we have:
\begin{align}
\label{eq:W}
    \text{SW}_p^p(\mu,\nu)  = \mathbb{E} [W(\theta;\mu,\nu)],\quad  \text{and} \quad \widehat{\text{SW}}_{p}^p(\mu,\nu;L) = \frac{1}{L}\sum_{l=1}^L W(\theta_l;\mu,\nu),
\end{align}
where $\theta_1,\ldots,\theta_L \overset{i.i.d}{\sim} \sigma_0(\theta) :=\mathcal{U}(\mathbb{S}^{d-1})$.


\textbf{Control Variate.} A \textit{control variate}~\citep{owen2013monte} is a random variable $C(\theta)$ such that its expectation is tractable i.e., $\mathbb{E}[C(\theta)] = B$.  From the control variate, we consider the following variable:
\begin{align}
    W(\theta;\mu,\nu) - \gamma (C(\theta)-B) 
\end{align}
where $\gamma \in \mathbb{R}$. Therefore, it is easy to check that $\mathbb{E}[W(\theta;\mu,\nu) - \gamma (C(\theta)-B) ] =  \mathbb{E}[W(\theta;\mu,\nu)]= \text{SW}_p^p(\mu,\nu)$ since  $\mathbb{E}[C(\theta)] = B$.  Therefore, the Monte Carlo estimation of $\mathbb{E}[W(\theta;\mu,\nu) - \gamma (C(\theta)-B) ]$, i.e., $\frac{1}{L}\sum_{l=1}^L (W(\theta_k;\mu,\nu) - \gamma (C(\theta_l)-B) )$, is an unbiased estimation of $\mathbb{E}[W(\theta;\mu,\nu)]$.

Now, we consider the variance of the variable $W(\theta;\mu,\nu) - \gamma (C(\theta)-B) $.
\begin{align*}
    \text{Var}[W(\theta;\mu,\nu) - \gamma (C(\theta)-B) ] &=\text{Var}[W(\theta;\mu,\nu)] -2\gamma \text{Cov}[W(\theta;\mu,\nu),C(\theta)]+\gamma^2 \text{Var}[C(\theta)].
\end{align*}
The above variance attains its minimum, with respect to $\gamma$, for 
$\gamma^\star = \frac{\text{Cov}[W(\theta;\mu,\nu),C(\theta)]}{\text{Var}[C(\theta)]} $ 
Using the optimal $\gamma^\star$, we the  minimum variance of is:
\begin{align}
     \text{Var}[W(\theta;\mu,\nu)]\left(1-\frac{\text{Cov}[W(\theta;\mu,\nu),C(\theta)]^2}{\text{Var}[W(\theta;\mu,\nu)] \text{Var}[C(\theta)]}\right).
\end{align}
Therefore, the variance of $W(\theta;\mu,\nu) - \gamma^\star  (C(\theta)-B) $ with is \textbf{lower} than $W(\theta;\mu,\nu)$ if the control variate $C(\theta)$ has  a correlation with $W(\theta;\mu,\nu)$.

\begin{definition}
    \label{def:Z} Given a control variate $C(\theta)$, the corresponding controlled projected one-dimensional Wasserstein distance is: 
    \begin{align}
        Z(\theta;\mu,\nu,C(\theta)) = W(\theta;\mu,\nu) - \frac{\text{Cov}[W(\theta;\mu,\nu),C(\theta)]}{\text{Var}[C(\theta)]}(C(\theta)-B).
    \end{align}
\end{definition}

In practice, computing $\gamma^\star$ might be intractable, hence, we can estimate $\gamma^\star$ using Monte Carlo samples.


\textbf{Control Variate Estimator of the SW distance.} Now, we can form the new estimation of $ \text{SW}_p^p(\mu,\nu)$. 
\begin{definition}
    \label{def:cvsw} Given a control variate $C(\theta)$ with $\mathbb{E}[C(\theta)]= B$, the number of projections $L\geq 1$, the \textit{Control Variate Sliced Wasserstein} estimator is:
\begin{align}
    \widehat{\text{CV-SW}}_p^p(\mu,\nu;L,C(\theta)) = \widehat{\text{SW}}_{p}^p(\mu,\nu;L) - \widehat{\gamma^\star}_L  \frac{1}{L}\sum_{l=1}^L (C(\theta_l)-B),
\end{align}
where $\theta_1,\ldots,\theta_L \sim \sigma_0(\theta)$, $\widehat{\text{SW}}_{p}^p(\mu,\nu;L) = \frac{1}{L}\sum_{l=1}^L W(\theta_l;\mu,\nu)$, and the estimated optimal coefficient $ \widehat{\gamma^\star}_L = \frac{\widehat{\text{Cov}}[W(\theta;\mu,\nu),C(\theta);L]}{\widehat{\text{Var}}[C(\theta);L]}$ with $\widehat{\text{Cov}}[W(\theta;\mu,\nu),C(\theta);L] = \frac{1}{L} \sum_{l=1}^L (W(\theta_l;\mu,\nu) - \widehat{\text{SW}}_{p}^p(\mu,\nu;L)) (C(\theta_l)-B)$, $\widehat{\text{Var}}[C(\theta);L] = \frac{1}{L} \sum_{l=1}^L (C(\theta_l)-B)^2$.
\end{definition}

\begin{remark}
    In Definition~\ref{def:cvsw}, the optimal coefficient $\widehat{\gamma^\star}_L$ is estimated by reusing Monte Carlo samples $\theta_1,\ldots,\theta_L$. It is possible to estimate $\widehat{\gamma^\star}_L$ using new samples $\theta_1',\ldots,\theta_L'$ to make it independent to $\widehat{\text{SW}}_{p}^p(\mu,\nu;L)$. Nevertheless, this resampling approach costs doubling computational complexity. Therefore, the estimation in Definition~\ref{def:cvsw} is preferable in practice~\citep{owen2013monte}. It is worth noting that the estimation is only asymptotic unbiased, however, it is negligible since the unbiasedness is always lost after taking the $p$-root for computing the value of SW with finite $L$.
\end{remark}
In this section, we have not yet specified $C(\theta)$. In the next session, we will focus on constructing a computationally efficient $C(\theta)$ that is also effective by conditioning it on two probability measures $\mu$ and $\nu$. Specifically, we define $C(\theta):=C(\theta;\mu,\nu)$, which involves conditioning on these two probability measures $\mu$ and $\nu$.
\subsection{Constructing Control Variates}
\label{subsec:constructing}

We recall that we are interested in the random variable $W(\theta;\mu,\nu)= \text{W}_p^p(\theta \sharp \mu,\theta \sharp \nu)$. It is desirable to have a control variate $C(\theta)$ such that $C(\theta) \approx W(\theta;\mu,\nu)$~\citep{owen2013monte}. Therefore, it is natural to also construct $C(\theta):=C(\theta;\mu,\nu)$ based on two measures $\mu,\nu$. Moreover, since we know the closed-form solution of Wasserstein distance between two Gaussians, control variates in the form $C(\theta;\mu,\nu)=\text{W}_2^2(\mathcal{N}(m_1(\theta;\mu),\sigma_1^2(\theta;\mu)),\mathcal{N}(m_2(\theta;\nu),\sigma_2^2(\theta;\nu)))$ could be tractable.

\textbf{Gaussian Approximation.} The question of constructing the control variate i.e., specifying $\mathcal{N}(m_1(\theta;\mu),\sigma_1^2(\theta;\mu))$ and $\mathcal{N}(m_2(\theta;\nu),\sigma_2^2(\theta;\nu))$ now requires finding the Gaussian approximation of two projected measures $\theta \sharp \mu$ and $\theta \sharp \nu$. When two measures are discrete, namely, $\mu= \sum_{i=1}^n \alpha_i \delta_{x_i}$ ($\sum_{i=1}^n\alpha_i=1 $) and $\nu= \sum_{i=1}^m \beta_i\delta_{y_i}$ ($\sum_{i=1}^m\beta_i=1 $), we perform the following optimization:
\begin{align}
    &\label{eq:approx1}m_1(\theta;\mu),\sigma_1^2(\theta;\mu) = \text{argmax}_{m_1,\sigma_1^2}\left[ \sum_{i=1}^n \alpha_i\log \left(\frac{1}{\sqrt{2\pi \sigma_1^2}} \exp \left(-\frac{1}{2 \sigma_1^2} (\theta^\top x_i - m_1)^2 \right)\right)\right], \\
    &\label{eq:approx2}m_2(\theta;\nu),\sigma_2^2(\theta;\nu) = \text{argmax}_{m_2,\sigma_2^2}\left[ \sum_{i=1}^m \beta_i\log \left(\frac{1}{\sqrt{2\pi \sigma_2^2}} \exp \left(-\frac{1}{2 \sigma_2^2} (\theta^\top y_i - m_2)^2 \right)\right)\right],
\end{align}

\begin{proposition}
    \label{prop:Gaussianapproximation}
    Let  $\mu$ and $\mu$ be two discrete probability measures i.e., $\mu= \sum_{i=1}^n \alpha_i \delta_{x_i}$ ($\sum_{i=1}^n\alpha_i=1 $) and $\nu= \sum_{i=1}^m \beta_i\delta_{y_i}$ ($\sum_{i=1}^m\beta_i=1 $), we have:
    $
    m_1(\theta;\mu) =  \sum_{i=1}^n   \alpha_i \theta^\top x_i$, $\sigma_1^2(\theta;\mu)  =  \sum_{i=1}^n \alpha_i\left(\theta^\top x_i - m_1 (\theta;\mu)\right)^2$, $
    m_2(\theta;\nu) =  \sum_{i=1}^m  \beta_i   \theta^\top y_i $, $ \sigma_2^2(\theta;\nu) =   \sum_{i=1}^m \beta_i \left(\theta^\top y_i - m_2 (\theta;\nu)\right)^2,$
are solution of the problems in equation~\ref{eq:approx1}-~\ref{eq:approx2}.
\end{proposition}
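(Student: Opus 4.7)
\textbf{Proof plan for Proposition~\ref{prop:Gaussianapproximation}.}

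The plan is to recognize the two optimization problems in \eqref{eq:approx1}--\eqref{eq:approx2} as weighted maximum likelihood estimation problems for a univariate Gaussian fit to the one-dimensional samples $\theta^\top x_1,\ldots,\theta^\top x_n$ (with weights $\alpha_i$) and $\theta^\top y_1,\ldots,\theta^\top y_m$ (with weights $\beta_i$), respectively. Since the two problems are structurally identical, I would carry out the argument once, say for $\mu$, and then obtain the statement for $\nu$ by symmetry.

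First I would introduce the shorthand $u_i := \theta^\top x_i$ and rewrite the objective as
\begin{align*}
F(m_1,\sigma_1^2) \;=\; -\tfrac{1}{2}\log(2\pi\sigma_1^2)\sum_{i=1}^n \alpha_i \;-\;\frac{1}{2\sigma_1^2}\sum_{i=1}^n \alpha_i (u_i-m_1)^2,
\end{align*}
using $\sum_i \alpha_i = 1$ to simplify the first sum to $-\tfrac{1}{2}\log(2\pi\sigma_1^2)$. Next I would take the first-order optimality conditions. Setting $\partial F/\partial m_1 = 0$ gives $\sum_i \alpha_i (u_i - m_1) = 0$, which yields $m_1(\theta;\mu) = \sum_i \alpha_i \theta^\top x_i$. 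Setting $\partial F/\partial \sigma_1^2 = 0$ gives $-\tfrac{1}{2\sigma_1^2} + \tfrac{1}{2\sigma_1^4}\sum_i \alpha_i(u_i-m_1)^2 = 0$, which after substituting the stationary value of $m_1$ yields $\sigma_1^2(\theta;\mu) = \sum_i \alpha_i(\theta^\top x_i - m_1(\theta;\mu))^2$, matching the claimed expressions.

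Finally I would justify that the unique critical point is in fact a global maximizer. For fixed $\sigma_1^2>0$, $F$ is a strictly concave quadratic in $m_1$, so the stationary point maximizes over $m_1$. Plugging the optimal $m_1$ back in reduces the problem to maximizing $g(\sigma_1^2) = -\tfrac{1}{2}\log(2\pi\sigma_1^2) - \tfrac{S}{2\sigma_1^2}$ over $\sigma_1^2>0$, where $S = \sum_i \alpha_i(u_i-m_1)^2$; a direct computation shows $g'(\sigma_1^2)=0$ has the unique solution $\sigma_1^2 = S$ and $g\to -\infty$ at both $0$ and $\infty$, so this is the global maximizer. The only subtlety (and the closest thing to an obstacle) is the degenerate case $S=0$, where all projected points coincide; in that case the Gaussian MLE is not attained in the interior, and one either excludes this case or interprets the Gaussian as a Dirac limit --- worth a brief remark but not a real difficulty. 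The identical argument with $\beta_i,y_i$ in place of $\alpha_i,x_i$ establishes the formulas for $m_2(\theta;\nu)$ and $\sigma_2^2(\theta;\nu)$, completing the proof.
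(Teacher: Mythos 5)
Your proposal is correct and follows essentially the same route as the paper's proof: setting the first-order conditions of the weighted Gaussian log-likelihood to zero and solving for the mean and variance. The only difference is that you additionally verify that the unique stationary point is a global maximizer (and flag the degenerate case where all projected points coincide), which the paper's proof omits; this is a small but worthwhile strengthening rather than a different argument.
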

We refer to Appendix~\ref{subsec:prop:Gaussianapproximation} for the detailed proof of Proposition~\ref{prop:Gaussianapproximation}. It is worth noting that the closed-form in Proposition~\ref{prop:Gaussianapproximation} can also be seen as the solution from using moment matching. When $\mu$ and $\nu$ are continuous, we can solve the optimization by using stochastic gradient descent by sampling from $\mu$ and $\nu$ respectively. In addition, we can use corresponding empirical measures of $\mu$ and $\nu$ i.e., $\mu_n =  \frac{1}{n} \sum_{i=1}^n \delta_{X_i}$ and $\nu_n =  \frac{1}{n} \sum_{i=1}^n \delta_{Y_i}$ as proxies (where $X_1,\ldots,X_n \overset{i.i.d}{\sim}\mu$ and $Y_1,\ldots,Y_n \overset{i.i.d}{\sim}\nu$). Beyond optimization, we can also utilize the Laplace approximation to obtain two approximated Gaussians for $\theta \sharp \mu$ and $\theta \sharp \nu$ when the push-forward densities are tractable. We refer the reader to Appendix~\ref{sec:discussion} for more detail.

\textbf{Constructing Control Variates.} From the closed-form of the Wasserstein-2 distance between two Gaussians in equation~\ref{eq:WG}, we have: $
    \text{W}_2^2(\mathcal{N}(m_1(\theta;\mu),\sigma_1^2(\theta;\mu)),\mathcal{N}(m_2(\theta;\nu),\sigma_2^2(\theta;\nu)))
    = (m_1(\theta;\mu)-m_2(\theta;\nu))^2 + \sigma_1(\theta;\mu)^2 + \sigma_2(\theta;\nu)^2 -2 \sigma_1(\theta;\mu) \sigma_2(\theta;\nu).
$
 To calculate $\mathbb{E}[\text{W}_2^2(\mathcal{N}(m_1(\theta;\mu),\sigma_1^2(\theta;\mu)),\mathcal{N}(m_2(\theta;\nu),\sigma_2^2(\theta;\nu)))]$ as the requirement of a control variate, we could calculate the expectation of each term i.e., $\mathbb{E}[(m_1(\theta;\mu)-m_2(\theta;\nu))^2]$, $\mathbb{E}\left[\sigma_1^2(\theta;\mu)\right]$, $\mathbb{E}\left[\sigma_2^2(\theta;\nu)\right]$, and $\mathbb{E}[\sigma_1(\theta;\mu) \sigma_2(\theta;\nu)]$.

\begin{proposition}
    \label{prop:calcuating} Given two discrete measures $\mu$ and $\nu$, $m_1(\theta;\mu)$, $m_2(\theta;\nu)$, $\sigma_1^2(\theta;\mu)$, and $\sigma_1^2(\theta;\mu)$  as in Proposition~\ref{prop:Gaussianapproximation}, we obtain:
    \begin{align}
        &\mathbb{E}[(m_1(\theta;\mu)-m_2(\theta;\nu))^2] = \frac{1}{d} \left\| \sum_{i=1}^n \alpha_i  x_i -  \sum_{i=1}^m \beta_i y_i\right\|^2,\\
        &\mathbb{E}\left[\sigma_1^2(\theta;\mu)\right] = \frac{1}{d} \sum_{i=1}^n \alpha_i \left \| x_i - \sum_{i'=1}^n \alpha_{i'}x_{i'} \right\|^2, \quad \mathbb{E}\left[\sigma_2^2(\theta;\nu)\right]=\frac{1}{d} \sum_{i=1}^m \beta_i \left \| y_i - \sum_{i'=1}^m \beta_{i'} y_{i'} \right\|^2
    \end{align}
\end{proposition}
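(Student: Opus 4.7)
The plan is to reduce every expectation to a quadratic form in $\theta$ and then invoke the single fact that $\mathbb{E}_{\theta \sim \mathcal{U}(\mathbb{S}^{d-1})}[\theta \theta^\top] = \frac{1}{d} I_d$, which I would state upfront as a lemma (or simply cite, since it is standard: uniformity on the sphere forces the covariance matrix to be a scalar multiple of the identity, and taking the trace pins the scalar down to $1/d$).

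For the first identity, I would rewrite
\[
m_1(\theta;\mu) - m_2(\theta;\nu) \;=\; \theta^\top\!\Big(\sum_{i=1}^n \alpha_i x_i - \sum_{j=1}^m \beta_j y_j\Big) \;=\; \theta^\top v,
\]
using the closed-form expressions from Proposition~\ref{prop:Gaussianapproximation}. Then $(m_1-m_2)^2 = v^\top \theta\theta^\top v$, and taking expectation gives $\mathbb{E}[(m_1-m_2)^2] = v^\top \mathbb{E}[\theta\theta^\top] v = \tfrac{1}{d} \|v\|^2$, which is exactly the claim.

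For the second identity, I would set $\bar x := \sum_{i'=1}^n \alpha_{i'} x_{i'}$, so that $m_1(\theta;\mu) = \theta^\top \bar x$. Then
\[
\sigma_1^2(\theta;\mu) \;=\; \sum_{i=1}^n \alpha_i \big(\theta^\top (x_i - \bar x)\big)^2 \;=\; \sum_{i=1}^n \alpha_i\, (x_i - \bar x)^\top \theta\theta^\top (x_i - \bar x).
\]
Linearity of expectation and the same $\mathbb{E}[\theta\theta^\top] = \tfrac{1}{d} I_d$ identity then yields $\mathbb{E}[\sigma_1^2(\theta;\mu)] = \tfrac{1}{d} \sum_i \alpha_i \|x_i - \bar x\|^2$. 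The third identity follows by the identical argument applied to $\nu$ and $\bar y := \sum_{j'} \beta_{j'} y_{j'}$.

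There is no real obstacle here: the whole proof rests on the sphere-moment identity and on the fact that $m_1, m_2$ and the centered projections are all linear in $\theta$, so the expectations decouple cleanly from the support structure. If anything, the only thing worth being careful about is not confusing $\mathbb{E}[\theta\theta^\top] = \tfrac{1}{d} I_d$ (spherical uniform) with $I_d$ (isotropic Gaussian); I would include a one-line derivation via symmetry and trace to keep the proof self-contained.
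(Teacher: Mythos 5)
Your proposal is correct and follows essentially the same route as the paper: reduce each expectation to a quadratic form in $\theta$ and apply the moment identity $\mathbb{E}[\theta\theta^\top]=\frac{1}{d}\mathbf{I}_d$, which the paper likewise isolates as a lemma proved by symmetry and the trace constraint. No gaps.
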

The proof of Proposition~\ref{prop:calcuating} is given in Appendix~\ref{subsec:prop:calcuating}.

Unfortunately, we cannot compute the expectation for the last term $\mathbb{E}[\sigma_1(\theta;\mu) \sigma_2(\theta;\nu)]$. 
However, we could construct control variates using lower-bound and upper-bound of $\text{W}_2^2(\mathcal{N}(m_1(\theta;\mu),\sigma_1^2(\theta;\mu)),\mathcal{N}(m_2(\theta;\nu),\sigma_2^2(\theta;\nu)))$.
\begin{definition}
    \label{def:cvs} Given two Gaussian approximations of $\theta \sharp \mu$ and $\theta \sharp \nu$ i.e., $\mathcal{N}(m_1(\theta;\mu),\sigma_1^2(\theta;\mu))$ and $\mathcal{N}(m_2(\theta;\nu),\sigma_2^2(\theta;\nu))$, we define the following two control variates:
    \begin{align}
    &C_{low}(\theta;\mu,\nu) = (m_1(\theta;\mu)-m_2(\theta;\nu))^2 \\
    &C_{up}(\theta;\mu,\nu) = (m_1(\theta;\mu)-m_2(\theta;\nu))^2 +  \sigma_1^2(\theta;\mu) + \sigma_2^2(\theta;\nu).
\end{align}
\end{definition}

We now discuss some properties of the proposed control variates.

\begin{proposition}
\label{prop:bound}
We have the following relationship:
\begin{align}
    C_{low}(\theta;\mu,\nu) \leq \text{W}_2^2(\mathcal{N}(m_1(\theta;\mu),\sigma_1^2(\theta;\mu)),\mathcal{N}(m_2(\theta;\nu),\sigma_2^2(\theta;\nu))) \leq C_{up}(\theta;\mu,\nu).
\end{align}
\end{proposition}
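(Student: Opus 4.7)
The plan is to invoke the closed-form expression for the Wasserstein-2 distance between two one-dimensional Gaussians (which is just the specialization of equation~\ref{eq:WG} to $d=1$, where the matrix square root reduces to the scalar $\sigma$) and then squeeze the resulting expression between $C_{low}$ and $C_{up}$ by two elementary algebraic inequalities.

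First I would write out the middle quantity explicitly using equation~\ref{eq:WG} with scalar variances:
\begin{align*}
\text{W}_2^2(\mathcal{N}(m_1(\theta;\mu),\sigma_1^2(\theta;\mu)),\mathcal{N}(m_2(\theta;\nu),\sigma_2^2(\theta;\nu))) = (m_1(\theta;\mu)-m_2(\theta;\nu))^2 + (\sigma_1(\theta;\mu)-\sigma_2(\theta;\nu))^2,
\end{align*}
after noting that $\sigma_1^2 + \sigma_2^2 - 2\sigma_1\sigma_2 = (\sigma_1-\sigma_2)^2$ (since $\sigma_1,\sigma_2\geq 0$ the square roots are unambiguous).

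Next, for the lower bound I would observe that $(\sigma_1(\theta;\mu)-\sigma_2(\theta;\nu))^2\geq 0$, so dropping this term gives
\begin{align*}
(m_1(\theta;\mu)-m_2(\theta;\nu))^2 \;=\; C_{low}(\theta;\mu,\nu) \;\leq\; \text{W}_2^2(\mathcal{N}(m_1,\sigma_1^2),\mathcal{N}(m_2,\sigma_2^2)).
\end{align*}
For the upper bound I would expand $(\sigma_1-\sigma_2)^2 = \sigma_1^2 + \sigma_2^2 - 2\sigma_1\sigma_2 \leq \sigma_1^2 + \sigma_2^2$, where the inequality uses $\sigma_1(\theta;\mu)\sigma_2(\theta;\nu)\geq 0$ (again because variances are nonnegative, so the nonnegative square roots of $\sigma_1^2(\theta;\mu)$ and $\sigma_2^2(\theta;\nu)$ have nonnegative product). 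Adding $(m_1-m_2)^2$ to both sides yields
\begin{align*}
\text{W}_2^2(\mathcal{N}(m_1,\sigma_1^2),\mathcal{N}(m_2,\sigma_2^2)) \;\leq\; (m_1(\theta;\mu)-m_2(\theta;\nu))^2 + \sigma_1^2(\theta;\mu) + \sigma_2^2(\theta;\nu) \;=\; C_{up}(\theta;\mu,\nu).
\end{align*}

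There is no real obstacle here: the argument is a two-line reduction from equation~\ref{eq:WG} together with the nonnegativity $\sigma_i(\theta;\cdot)\geq 0$ guaranteed by Proposition~\ref{prop:Gaussianapproximation} (which gives each $\sigma_i^2$ as a nonnegative weighted sum of squares). The only mild subtlety worth flagging in the write-up is the convention that $\sigma_i(\theta;\cdot)$ denotes the nonnegative square root of $\sigma_i^2(\theta;\cdot)$, which is what makes both the identity $\sigma_1^2+\sigma_2^2-2\sigma_1\sigma_2=(\sigma_1-\sigma_2)^2$ and the inequality $\sigma_1\sigma_2\geq 0$ valid.
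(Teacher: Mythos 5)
Your proposal is correct and is essentially identical to the paper's own argument: both reduce the middle term to $(m_1-m_2)^2 + (\sigma_1-\sigma_2)^2$ via the one-dimensional specialization of equation~\ref{eq:WG} and then sandwich $(\sigma_1-\sigma_2)^2$ between $0$ and $\sigma_1^2+\sigma_2^2$ using nonnegativity of $\sigma_1\sigma_2$. Your explicit remark about taking the nonnegative square roots is a minor but welcome clarification that the paper leaves implicit.
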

The proof of Proposition~\ref{prop:bound} follows directly from the construction of the control variates. For completeness, we provide the proof in Appendix~\ref{subsec:prop:bound}.

\begin{proposition}
\label{prop:clow}
Let $\mu= \sum_{i=1}^n\alpha_i \delta_{x_i}$ and $\nu= \sum_{i=1}^m \beta_i \delta_{y_i}$, using the control variate $C_{low}(\theta;\mu,\nu)$ is equivalent to using the following control variate:
\begin{align}
     \text{W}_2^2(\theta \sharp \mathcal{N}(\mathbf{m_1}(\mu),\sigma_1^2(\mu)\mathbf{I}),  \theta \sharp \mathcal{N}(\mathbf{m_2}(\nu),\sigma_2^2(\nu)\mathbf{I})),
\end{align}
where $\mathbf{m_1}(\mu),\sigma_1^2(\mu) = \text{argmax}_{\mathbf{m_1},\sigma_1^2} \sum_{i=1}^n \alpha_i \log \left(\frac{1}{\sqrt{(2\pi)^d |\sigma_1^2 \mathbf{I}|}} e^{\left(-\frac{1}{2}(x_i -\mathbf{m_1})^\top |\sigma_1^2 \mathbf{I}|^{-1} (x_i -\mathbf{m_1})\right)}\right)  $ and $\mathbf{m_2}(\nu),\sigma_2^2(\nu) = \text{argmax}_{\mathbf{m_2},\sigma_2^2} \sum_{i=1}^m \beta_i \log \left(\frac{1}{\sqrt{(2\pi)^d |\sigma_2^2 \mathbf{I}|}} e^{\left(-\frac{1}{2}(y_i -\mathbf{m_2})^\top |\sigma_2^2 \mathbf{I}|^{-1} (y_i -\mathbf{m_2})\right)}\right)$.
\end{proposition}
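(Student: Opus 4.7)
The plan is to unravel both sides of the claimed equivalence and show they differ only by a quantity that does not depend on $\theta$, which makes them yield the same control-variate estimator. First I would solve the two isotropic Gaussian MLE problems in the proposition statement. Differentiating $\sum_i \alpha_i \log \mathcal{N}(x_i \mid \mathbf{m_1}, \sigma_1^2 \mathbf{I})$ with respect to $\mathbf{m_1}$ and $\sigma_1^2$ gives, by the same first-order argument used in Proposition~\ref{prop:Gaussianapproximation},
\[
\mathbf{m_1}(\mu) = \sum_{i=1}^n \alpha_i x_i, \qquad \sigma_1^2(\mu) = \frac{1}{d} \sum_{i=1}^n \alpha_i \|x_i - \mathbf{m_1}(\mu)\|^2,
\]
and analogously for $\mathbf{m_2}(\nu), \sigma_2^2(\nu)$. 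The $1/d$ factor simply arises from the $d$-dimensional isotropic log-density.

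Next I would push these Gaussians through the projection $\theta \in \mathbb{S}^{d-1}$: since $\theta \sharp \mathcal{N}(\mathbf{m}, \sigma^2 \mathbf{I}) = \mathcal{N}(\theta^\top \mathbf{m}, \sigma^2 \|\theta\|_2^2) = \mathcal{N}(\theta^\top \mathbf{m}, \sigma^2)$, applying the 1D Gaussian Wasserstein formula from equation~\ref{eq:WG} gives
\[
\text{W}_2^2\bigl(\theta \sharp \mathcal{N}(\mathbf{m_1}(\mu), \sigma_1^2(\mu) \mathbf{I}), \theta \sharp \mathcal{N}(\mathbf{m_2}(\nu), \sigma_2^2(\nu) \mathbf{I})\bigr) = \bigl(\theta^\top \mathbf{m_1}(\mu) - \theta^\top \mathbf{m_2}(\nu)\bigr)^2 + \bigl(\sigma_1(\mu) - \sigma_2(\nu)\bigr)^2.
\]
By Proposition~\ref{prop:Gaussianapproximation}, $m_1(\theta;\mu) = \theta^\top \mathbf{m_1}(\mu)$ and $m_2(\theta;\nu) = \theta^\top \mathbf{m_2}(\nu)$, so the first term is exactly $C_{low}(\theta;\mu,\nu)$. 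Writing $c := (\sigma_1(\mu) - \sigma_2(\nu))^2$, the claimed control variate equals $C_{low}(\theta;\mu,\nu) + c$, a deterministic constant shift in $\theta$.

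Finally I would argue the ``equivalence'': any two control variates differing by a $\theta$-independent constant $c$ produce identical CV-SW estimators. Indeed, if $C'(\theta) = C(\theta) + c$ with $\mathbb{E}[C(\theta)] = B$, then $\mathbb{E}[C'(\theta)] = B + c$, so $C'(\theta) - (B+c) = C(\theta) - B$; also $\text{Cov}[W(\theta;\mu,\nu), C'(\theta)] = \text{Cov}[W(\theta;\mu,\nu), C(\theta)]$ and $\text{Var}[C'(\theta)] = \text{Var}[C(\theta)]$, so the optimal coefficient $\gamma^\star$ is unchanged. Plugging into Definition~\ref{def:cvsw}, the empirical coefficient $\widehat{\gamma^\star}_L$ and the centered term $\frac{1}{L}\sum_l (C'(\theta_l) - (B+c)) = \frac{1}{L}\sum_l (C(\theta_l) - B)$ are identical, so $\widehat{\text{CV-SW}}_p^p$ with either control variate yields the same numerical value, sample by sample.

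The main obstacle is merely bookkeeping: verifying that the $1/d$ in the isotropic-covariance MLE combines with $\|\theta\|_2^2 = 1$ to make the projected variance equal to $\sigma_1^2(\mu)$ (not some $\theta$-dependent quantity), and carefully checking that the ``equivalence'' statement we use — invariance of the CV-SW estimator under deterministic shifts of $C(\theta)$ — is exactly what the proposition means. The rest is direct substitution.
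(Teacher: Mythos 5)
Your proof is correct and follows essentially the same route as the paper's: solve the isotropic Gaussian fitting problem, push forward to get $(\theta^\top \mathbf{m_1}-\theta^\top \mathbf{m_2})^2 + (\sigma_1-\sigma_2)^2$, and note that the second term is $\theta$-independent, so the centered control variate, its variance, and its covariance with $W(\theta;\mu,\nu)$ coincide with those of $C_{low}$, leaving the estimator unchanged. The only cosmetic difference is that the paper reads $|\sigma_1^2\mathbf{I}|^{-1}$ literally as the reciprocal of the determinant and obtains $\sigma_1^2(\mu)=\left(\sum_{i=1}^n\alpha_i\|x_i-\mathbf{m_1}\|_2^2\right)^{1/d}$ rather than your $\frac{1}{d}\sum_{i=1}^n\alpha_i\|x_i-\mathbf{m_1}\|^2$, but since the argument only requires $\sigma_1^2(\mu)$ to be free of $\theta$, this changes nothing.
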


The proof for The Proposition~\ref{prop:clow} is given in Appendix~\ref{subsec:prop:clow}. The proposition means that using the control variate $C_{low}(\theta;\mu,\nu)$ is equivalent to using the Wasserstein-2 distance between two projected \textit{location-scale multivariate} Gaussians with these location-scale multivariate Gaussians are the approximation of the two original probability measures $\mu$ and $\nu$ on the original space.

\textbf{Control Variate Estimators of the SW distance.} From two defined control variates in Definition~\ref{def:cvs}, we define the corresponding controlled projected one-dimensional Wasserstein distances and control variate sliced Wasserstein estimators.
\begin{definition}
    \label{def:controlled1DW}
    Given two control variates $C_{low}(\theta;\mu,\nu),C_{up}(\theta;\mu,\nu)$ in Definition~\ref{def:cvs}, the corresponding controlled projected one-dimensional Wasserstein distances are defined as:
    \begin{align}
    Z_{low}(\theta;\mu,\nu) = Z(\theta;\mu,\nu,C_{low}(\theta;\mu,\nu)), \quad Z_{up}(\theta;\mu,\nu) = Z(\theta;\mu,\nu,C_{up}(\theta;\mu,\nu)),
\end{align}
where $Z(\theta;\mu,\nu,C(\theta))$ is defined in Definition~\ref{def:Z}.
\end{definition}
\begin{definition}
    Given $Z_{low}(\theta,\mu,\nu)$, $Z_{up}(\theta,\mu,\nu)$, $\widehat{\text{CV-SW}}_p^p(\mu,\nu;L,C(\theta))$  as defined in Definition~\ref{def:controlled1DW} and Definition~\ref{def:controlled1DW}, the \textit{lowerbound} and \textit{upperbound} control variate sliced Wasserstein estimators are:
\begin{align}
    &\widehat{\text{LCV-SW}}_p^p(\mu,\nu;L) =\frac{1}{L}\sum_{i=1}^L Z_{low}(\theta_l,\mu,\nu)= \widehat{\text{CV-SW}}_p^p(\mu,\nu;L,C_{low}(\theta;\mu,\nu)), \\
    &\widehat{\text{UCV-SW}}_p^p(\mu,\nu;L)  =\frac{1}{L}\sum_{i=1}^L Z_{up}(\theta_l,\mu,\nu) = \widehat{\text{CV-SW}}_p^p(\mu,\nu;L,C_{up}(\theta;\mu,\nu)).
\end{align}
\end{definition}

We refer to Algorithm~\ref{alg:LCVSW}-~\ref{alg:UCVSW} for the detailed algorithms for the control variate estimators in Appendix~\ref{sec:algorithms}. 

\textbf{Computational Complexities.} When dealing with two discrete probability measures $\mu$ and $\nu$ that have at most $n$ supports, projecting the supports of the two measures to $L$ sets of projected supports costs $\mathcal{O}(Ldn)$ in time complexity. After that, fitting one-dimensional Gaussians also costs $\mathcal{O}(Ldn)$. From Proposition~\ref{prop:calcuating} and Definition~\ref{def:cvs}, computing the control variate $C_{low}(\theta;\mu,\nu)$ and its expected value $B_{low}(\mu,\nu)$ costs $\mathcal{O}(dn)$. Similarly, computing the control variate $C_{up}(\theta;\mu,\nu)$ and its expected value $B_{up}$ also costs $\mathcal{O}(dn)$. Overall, the time complexity of $\widehat{\text{LCV-SW}}_p^p(\mu,\nu;L)$ and $\widehat{\text{UCV-SW}}_p^p(\mu,\nu;L)$ is the same as the conventional SW, which is $\mathcal{O}(Ln\log n +Ldn)$. Similarly, their space complexities are also $\mathcal{O}(Ld+Ln)$ as the conventional estimator.


 \begin{figure}[t]
\begin{center}
    
  \begin{tabular}{cc}
  \widgraph{0.7\textwidth}{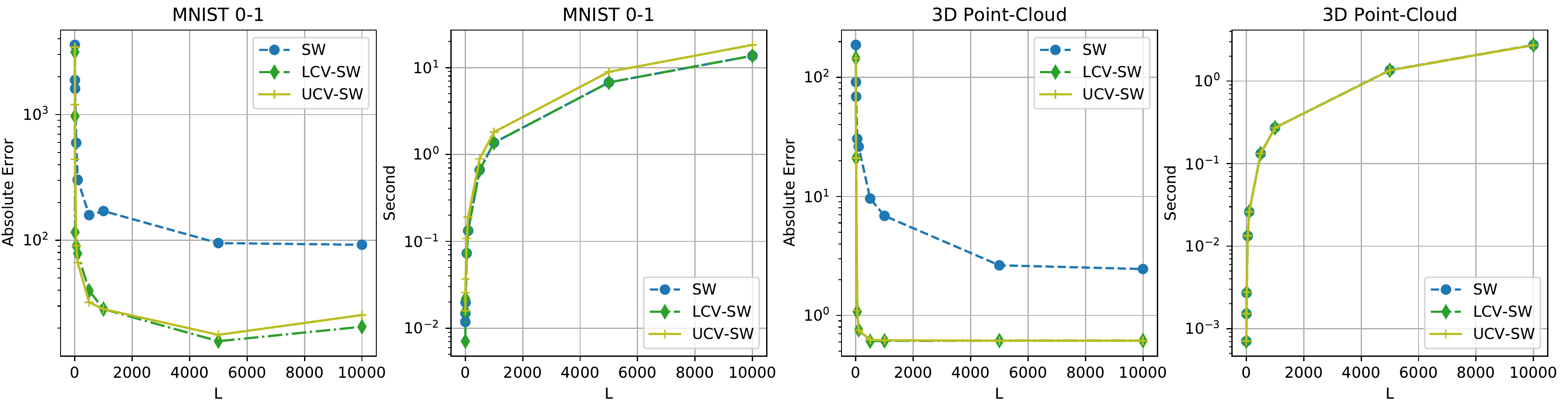}

  \end{tabular}
  \end{center}
  \vskip -0.2in
  \caption{
  \footnotesize{The empirical errors of the conventional estimator (SW) and the control variate estimators (LCV-SW, UCV-SW) when comparing empirical distributions over MNIST images and point-clouds.
}
} 
  \label{fig:variance}
   \vskip -0.2in
\end{figure}

\textbf{Gaussian Approximation of the sliced Wasserstein distance.} In a recent work~\citep{nadjahi2021fast}, the closed-form of Wasserstein between Gaussian distributions is utilized to approximate the value of the SW distance. The key idea is based on the concentration of random projections in high-dimension, i.e., they are approximately Gaussian. However, the resulting deterministic approximation is not very accurate since it is only based on the first two moments of two original measures. As a result, the proposed approximation can only work well when two measures have weak dimensional dependence. In contrast, we use the closed-form of Wasserstein between Gaussian distributions to design control variates, which can reduce the variance of the Monte Carlo estimator. Our estimators are still stochastic, converge to the true value of the SW when increasing the number of projections, and work well with probability measures that have arbitrary structures and dimensions.

\textbf{Beyond uniform slicing distribution and related works.} In the SW distance, the projecting direction follows the uniform distribution over the unit-hypersphere $\sigma_0(\theta)=\mathcal{U}(\mathbb{S}^{d-1})$. In some cases, changing the slicing distribution might benefit downstream applications. For example, the distributional sliced Wasserstein distance~\citep{nguyen2021distributional} can be written as $\text{DSW}_p^p(\mu,\nu)=\mathbb{E}_{\theta \sim \sigma (\theta)}\left[\text{W}_p^p (\theta \sharp \mu,\theta \sharp \nu)\right]$, where $\sigma(\theta)$ is a distribution over the unit hypersphere. We can directly apply the proposed control variates to the DSW as long as we can calculate $\mathbb{E}_{\theta \sim \sigma(\theta)}[\theta \theta^\top]$. If this is not the case, we can still use the proposed control variates by approximating the DSW via importance sampling. In particular, we can rewrite $\text{DSW}_p^p(\mu,\nu)=\mathbb{E}_{\theta \sim \sigma_0 (\theta)}\left[\text{W}_p^p (\theta \sharp \mu,\theta \sharp \nu)\frac{\sigma(\theta)}{\sigma_0(\theta)}\right]$, then use control variates. We discuss other related works including how we can apply the control variates to other variants of the SW distance in Appendix~\ref{sec:relatedworks}.
\begin{table}[!t]
    \centering
    \caption{\footnotesize{Summary of Wasserstein-2~\citep{flamary2021pot}  (multiplied by $10^4$), computational time in second (s) to reach step 8000 of different sliced Wasserstein estimators in gradient flows from three different runs. }}
    \scalebox{0.6}{
    \begin{tabular}{lcccccc}
    \toprule
    Distances &  Step 3000  ($\text{W}_2\downarrow$)& Step 4000 ($\text{W}_2\downarrow$)& Step 5000  ($\text{W}_2\downarrow$)& Step 6000($\text{W}_2\downarrow$) & Step 8000 ($\text{W}_2\downarrow$)& Time (s $\downarrow$)
    \\
    \midrule
    SW L=10  &$305.5196\pm0.4921$ &$137.8767\pm0.3631$&$36.2454\pm0.1387$&$0.1164\pm0.0022$& $2.1e-5\pm1.0e-5$&$\mathbf{26.13\pm0.04}$\\
    LCV-SW L=10 &$\mathbf{303.1001\pm0.1787}$&$\mathbf{136.0129\pm0.0923}$&$35.0929\pm0.1459$&$0.0538\pm0.0047$&$1.5e-5\pm0.2e-5$&$27.22\pm 0.24$\\
    UCV-SW L=10 &$303.1017\pm0.1783$&$136.0139\pm0.0921$&$\mathbf{35.0916\pm0.1444}$&$\mathbf{0.0535\pm0.0065}$&$\mathbf{1.4e-5\pm0.2e-5}$&$29.48\pm 0.23$ \\
    \midrule
    SW L=100&$300.7326\pm0.2375$&$134.1498\pm0.3146$&$33.9253\pm0.1349$&$0.0183\pm0.0011$&$1.6e-5\pm0.2e-5$&$\mathbf{220.20\pm0.21}$ \\
    LCV-SW L=100&$\mathbf{300.3924\pm0.0053}$&$133.6243\pm0.0065$&$33.5102\pm0.0031$&$\mathbf{0.0134\pm8.7e-5}$&$\mathbf{1.4e-5\pm0.1e-5}$&$221.75\pm0.35$\\
    UCV-SW L=100&$300.3927\pm0.0050$&$\mathbf{133.6242\pm0.0065}$&$\mathbf{33.5088\pm0.0028}$&$0.0136\pm0.0003$&$1.4e-5\pm0.2e-5$&$233.71\pm0.06$\\
    \bottomrule
    \end{tabular}
    }
    \label{tab:gf_main}
    \vskip -0.2in
\end{table}

\textbf{Gradient estimators.} In some cases, we might be interested in obtaining the gradient $\nabla_\phi \text{SW}_p^p(\mu,\nu_\phi) =  \nabla_\phi \mathbb{E}[\text{W}_p^p (\theta \sharp \mu,\theta \sharp \nu_\phi)]$ e.g., statistical inference. Since the expectation in the SW does not depend on $\phi$, we can use the Leibniz rule to exchange differentiation and expectation, namely,  $\nabla_\phi \mathbb{E}[\text{W}_p^p (\theta \sharp \mu,\theta \sharp \nu_\phi)] = \mathbb{E}[\nabla_\phi \text{W}_p^p (\theta \sharp \mu,\theta \sharp \nu_\phi)]$. After that, we can use $\nabla_\phi C(\theta;\mu,\nu_\phi)$ ($\nabla_\phi C_{low}(\theta;\mu,\nu_\phi)$ or $\nabla_\phi C_{up}(\theta;\mu,\nu_\phi)$)  as the control variate. However, estimating the optimal $\gamma^\star_i =  \frac{\text{Cov}[\nabla_{\phi_i} \text{W}_p^p (\theta \sharp \mu,\theta \sharp \nu_\phi), \nabla_{\phi_i} C(\theta;\mu,\nu_\phi)]}{\text{Var}[\nabla_{\phi_i} C(\theta;\mu,\nu_\phi)]}$ (for $i=1,\ldots,d'$ with $d'$ is the number of dimensions of parameters $\phi$) is computationally expensive and depends significantly on the specific settings of $\nu_\phi$. Therefore, we utilize a more computationally efficient estimator of the gradient i.e., $\mathbb{E}[\nabla_\phi Z(\theta;\mu,\nu_\phi)]$ for $Z(\theta;\mu,\nu_\phi)$ is $Z_{low}(\theta;\mu,\nu_\phi)$ or $Z_{up}(\theta;\mu,\nu_\phi)$ in Definition~\ref{def:controlled1DW}. After that, Monte Carlo samples are used to approximate expectations to obtain a stochastic gradient.
\vspace{-0.2em}
\section{Experiments}
\label{sec:experiments}
Our experiments aim to show that using control variates can benefit applications of the sliced Wasserstein distance. In particular, we show that the proposed control variate estimators have lower variance in practice when comparing probability measures over images and point-clouds in Section~\ref{subsec:comparing}. After that, we show that using the proposed control variate estimators can help to drive a gradient flow to converge faster to a target probability measure from a source probability measure while retaining approximately the same computation in Section~\ref{subsec:gf}. Finally, we show the proposed control variate estimators can also improve training deep generative models on standard benchmark images dataset such as CIFAR10, and CelebA in Section~\ref{subsec:dgm}. Due to the space constraint, additional experiments and detailed experimental settings are deferred to Appendix~\ref{sec:add_exp}.

\subsection{Comparing empirical probability measures over images and point-clouds}
\label{subsec:comparing}
\textbf{Settings.} We aim to first test how well the proposed control variates can reduce the variance in practice. To do this, we use conventional Monte Carlo estimation for the $SW_2^2$ with a very large $L$ value of $100000$ and treat it as the population value. Next, we vary $L$ in the set $\{2,5,10,50,100,500,1000,5000,7000,10000\}$ and compute the corresponding estimates of the SW with both the conventional estimator and control variate estimators. Finally, we calculate the absolute difference between the estimators and the estimated population as the estimated error. We apply this process to compare the empirical distributions over images of digit $0$ with the empirical distribution images of digit $1$ in MNIST~\citep{lecun1998gradient} (784 dimensions, with about $6000$ supports). Similarly, we compare two empirical distributions over two randomly chosen point-clouds in the ShapeNet Core-55 dataset~\citep{chang2015shapenet} (3 dimensions, $2048$ supports). We also  estimate the variance of the estimators via Monte Carlo samples with $L=100000$ in Table~\ref{tab:variace} in Appendix~\ref{subsec:add_comparing}.
 \begin{figure}[!t]
\begin{center}
    
  \begin{tabular}{cc}
  \widgraph{0.7\textwidth}{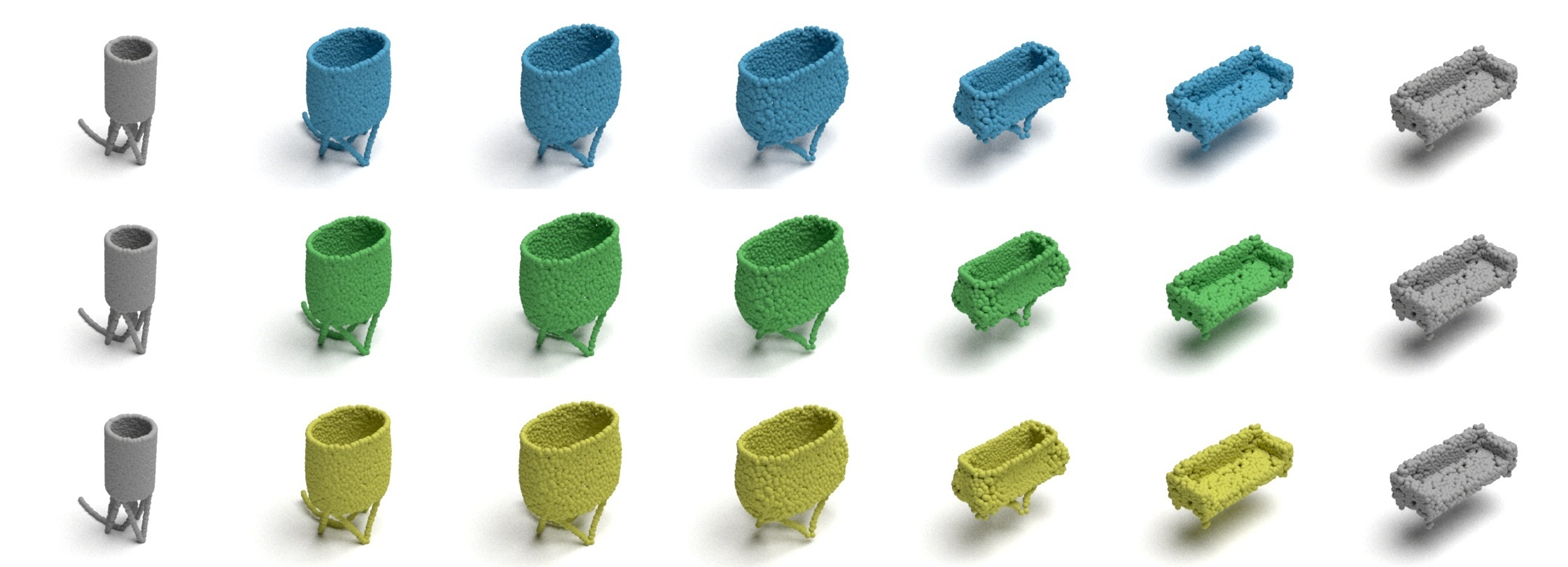}

  \end{tabular}
  \end{center}
  \vskip -0.2in
  \caption{
  \footnotesize{Point-cloud gradient flows for $L=10$ from SW, LCV-SW, and UCV-SW respectively.
}
} 
  \label{fig:flow}
   \vskip -0.2in
\end{figure}

\vspace{-0.8em}
\textbf{Results.} We show the estimated errors and the computational time which are averaged from 5 different runs in Figure~\ref{fig:variance}. From the figure, we observe that control variate estimators reduce considerably the error in both cases. On MNIST, the lower bound control variate estimator (LCV-SW) gives a lower error than the upper bound control variate estimator (UCV-SW) while being slightly faster. The computational time of the  LCV-SW has nearly identical computational time as the conventional estimator (SW). In the lower dimension i.e., the point-cloud case, two control variates give the same quality of reducing error and the same computational time. In this case, the computational time of both two control variates estimators are the same as the conventional estimator. From Table~\ref{tab:variace} in Appendix~\ref{subsec:add_comparing}, we observe that the variance of the control variate estimators is lower than the conventional estimator considerably, especially the LCV-SW. We refer to Appendix~\ref{subsec:add_comparing} for experiments on different pairs of digits and point-clouds which show the same phenomenon.

\subsection{Point Cloud Gradient Flows}
\label{subsec:gf}
\textbf{Settings.} We model a distribution $\mu(t)$ flowing with time $t$ along the sliced Wasserstein gradient flow  $\mu(t) \to \text{SW}_p(\mu(t),\nu)$, which drives it towards a target distribution $\nu$~\citep{santambrogio2015optimal}. Here, we set $\nu =  \frac{1}{n}\sum_{i=1}^n \delta_{Y_i}$ as a fixed empirical target distribution and the model distribution $\mu(t) = \frac{1}{n} \sum_{i=1}^n \delta_{X_i(t)}$, where  the time-varying point cloud $X(t)=\left(X_i(t)\right)_{i=1}^{n} \in\left(\mathbb{R}^{d}\right)^{ n}$. Starting at time $t = 0$,  we  integrate the ordinary differential equation  $\dot{X}(t)=- n \nabla_{X(t)} \left[\text{SW}_p\left(\frac{1}{n } \sum_{i=1}^n \delta_{X_i(t)}, \nu \right)\right] $ for each iteration. We choose $\mu(0)$ and $\nu$ are two random point-cloud shapes in ShapeNet Core-55 dataset~\citep{chang2015shapenet} which were  used in  Section~\ref{subsec:comparing}. After that, we use different estimators of the SW i.e., $\widehat{\text{SW}}_p$, $\widehat{\text{LCV-SW}}_p$, and $\widehat{\text{UCV-SW}}_p$, as the replacement of the for $\text{SW}_p$ to estimate the gradient including the proposed control variate estimators and the conventional estimator. Finally, we use $p=2$, and the Euler scheme with $8000$ iterations and step size $0.01$ to solve the flows.

\textbf{Results.} We use the Wasserstein-2 distance as a neutral metric to evaluate how close the model distribution $\mu(t)$ is to the target distribution $\nu$. We show the results for the conventional Monte Carlo estimator (denoted as SW), the control variate $C_{low}$  estimator (denoted as LCV-SW), and the control variate $C_{up}$  estimator (denoted as UCV-SW), with the number of projections $L=10,100$ in Table~\ref{tab:gf_main}. In the table, we also report the time in seconds that estimators need to reach the last step. In addition, we visually the flows for $L=100$ in Figure~\ref{fig:flow}. Moreover, we observe the same phenomenon on a different pair of point-clouds in Table~\ref{tab:gf_appendix} and in Figure~\ref{fig:flow2} in Appendix~\ref{subsec:add_gradient_flows}.

\begin{figure*}[!t]
\begin{center}
    
  \begin{tabular}{ccc}
  \widgraph{0.17\textwidth}{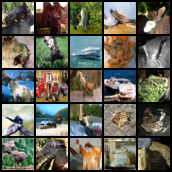} 
  &
\widgraph{0.17\textwidth}{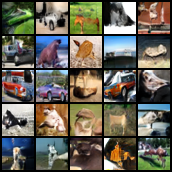} 
&
\widgraph{0.17\textwidth}{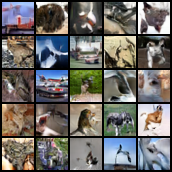}
\\
\widgraph{0.17\textwidth}{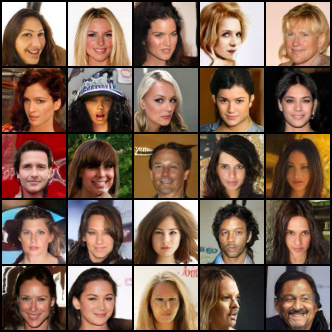} 
  &
\widgraph{0.17\textwidth}{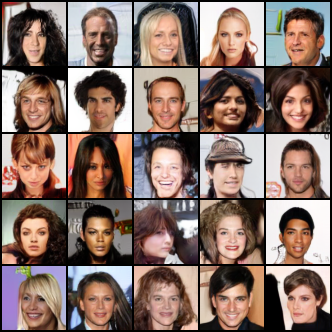} 
&
\widgraph{0.17\textwidth}{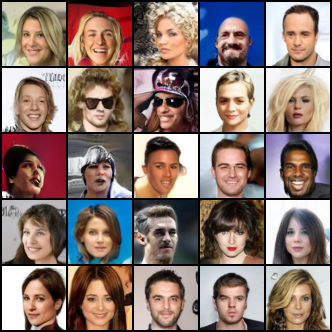} 
\\
SW L=10 & LCV-SW L=10 &UCV-SW L=10 
  \end{tabular}
  \end{center}
  \vskip -0.2in
  \caption{
  \footnotesize{Random generated images of distances on CIFAR10 and CelebA.
}
} 
  \label{fig:gen_images_main}
   \vskip -0.3in
\end{figure*}

\subsection{Deep Generative Modeling}
\label{subsec:dgm}
\textbf{Settings.} We conduct deep generative modeling on CIFAR10 (with image size 32x32)~\citep{krizhevsky2009learning}, \textit{non-croppeed} CelebA (with image size 64x64)~\citep{liu2015faceattributes} with the SNGAN architecture~\citep{miyato2018spectral}. For the framework, we follow the sliced Wasserstein generator~\citep{deshpande2018generative,nguyen2023energy} which is described in detail in Appendix~\ref{subsec:add_dgm}. The main evaluation metrics  are FID score~\citep{heusel2017gans} and Inception score (IS)~\citep{salimans2016improved}. On CelebA, we do not report the IS since it poorly captures the perceptual quality of face images~\citep{heusel2017gans}. The detailed settings about architectures, hyper-parameters, and evaluation of FID and IS are provided in Appendix~\ref{subsec:add_dgm}.

\textbf{Results.} We train generators with the conventional estimator (SW), and the proposed control variates estimators (LCV-SW and UCV-SW) with the number of projections $L=10,1000$. We report FID scores and IS scores in Table~\ref{tab:summary}. From the table, we observe that the LCV-SW gives the best generative models on both CIFAR10 and CelebA when $L=10$. In particular, the LCV-SW can reduce about $14\%$ FID score on CIFAR10, and about $10\%$ FID score on CelebA. Also, the LCV-SW increases the IS score  by about $2.6\%$. For the UCV-SW, it can enhance the performance slightly compared with the SW  when $L=10$. It is worth noting that, the computational times of estimators are approximately the same since the main computation in the application is for training neural networks. Therefore, we can see the benefit of reducing the variance by using control variates here. Increasing $L$ to $1000$, despite the gap being closer, the control variate estimators still give better scores than the conventional estimator. In greater detail, the LCV-SW can improve about $3.5\%$ FID score on CIFAR10, about $2\%$ IS score on CIFAR10, and about $2.7\%$ FID score on CelebA compared to the SW. In this setting of $L$, the UCV-SW gives the best FID scores on both CIFAR10 and CelebA. Concretely, it helps to decrease the FID score by about $6.1\%$ on CIFAR10 and about $5.4\%$ on CelebA. From the result, the UCV-SW seems to need more projections than the LCV-SW to approximate well the optimal control variate coefficient ($\gamma^\star$) since the corresponding control variate of the UCV-SW has two more random terms. For the qualitative result, we show randomly generated images from trained models in Figure~\ref{fig:gen_images_main} for $L=10$ and in Figure~\ref{fig:gen_images_appendix} for $L=1000$ in Appendix~\ref{subsec:add_dgm}. Overall, we obverse that generated images are visually consistent to the FID scores and IS scores in Table~\ref{tab:summary}.

\begin{table}[!t]
    \centering
    \caption{\footnotesize{Summary of FID and IS scores from three different runs on CIFAR10 (32x32), and CelebA (64x64).}}
    \scalebox{0.6}{
    \begin{tabular}{|l|cc|c|l|cc|c|}
    \toprule
     \multirow{2}{*}{Method}& \multicolumn{2}{c|}{CIFAR10 (32x32)}&\multicolumn{1}{c|}{CelebA (64x64)}& \multirow{2}{*}{Method}&\multicolumn{2}{c|}{CIFAR10 (32x32)}&\multicolumn{1}{c|}{CelebA (64x64)}\\
     \cmidrule{2-4}\cmidrule{6-8}
     & FID ($\downarrow$) &IS ($\uparrow$)& FID ($\downarrow$)  &&  FID ($\downarrow$) &IS ($\uparrow$)& FID ($\downarrow$)\\
    \midrule
    SW L=10& $21.72\pm0.37$ &$7.54\pm 0.06$& $ 11.14 \pm 1.12$ & SW L=1000& $14.07\pm0.97$ &$8.07\pm 0.06$&$10.05\pm 0.40$ \\
    LCV-SW L=10&$\mathbf{18.67\pm0.39}$&$\mathbf{7.74\pm 0.04}$&$\mathbf{10.17\pm 0.73}$ & LCV-SW L=1000 &$13.58\pm0.12$&$\mathbf{8.23\pm 0.02}$&$9.78\pm 0.47$\\
    UCV-SW L=10&$21.71\pm 0.36$&$7.57\pm 0.01$&$11.00\pm 0.04$&UCV-SW L=1000&$\mathbf{13.21\pm0.49}$&$8.21\pm 0.09$ &$\mathbf{9.51\pm 0.30}$\\
         \bottomrule
    \end{tabular}
    }
    \label{tab:summary}
    \vskip -0.25in
\end{table}

\vspace{-0.5em}
\section{Conclusion}
\label{sec:conclusion}
\vspace{-0.5em}
We have presented a method for reducing the variance of Monte Carlo estimation of the sliced Wasserstein (SW) distance using control variates. Moreover, we propose two novel control variates that serve as lower and upper bounds for the Wasserstein distance between two Gaussian approximations of two measures. By using the closed-form solution of the Wasserstein distance between two Gaussians and the closed-form of Gaussian fitting of discrete measures via Kullback Leibler divergence, we demonstrate that the proposed control variates can be computed in linear time. Consequently, the control variate estimators have the same computational complexity as the conventional estimator of SW distance. On the experimental side, we demonstrate that the proposed control variate estimators have smaller variances than the conventional estimator when comparing probability measures over images and point clouds. Finally, we show that using the proposed control variate estimators can lead to improved performance of point-cloud SW gradient flows and generative models.
\clearpage
\section*{Acknowledgements}
We would like to thank Peter Mueller for his insightful discussion during the course of this project. NH acknowledges support from the NSF IFML 2019844 and the NSF AI Institute for Foundations of Machine Learning.

\bibliography{iclr2024_conference}
\bibliographystyle{iclr2024_conference}
\clearpage

\appendix
\begin{center}
{\bf{\Large{Supplement to ``Sliced Wasserstein Estimation with Control Variates"}}}
\end{center}

We first provide the skipped proofs in Appendix~\ref{sec:proofs}. Additionally, we present the detailed algorithms for the conventional estimator of the sliced Wasserstein distance and the control variate estimators in Appendix~\ref{sec:algorithms}. Sliced Wasserstein variants are discussed in Appendix~\ref{sec:relatedworks}, and we also discuss how to use control variates in those variants in the same appendix. In Appendix~\ref{sec:add_exp}, we provide additional experiments and their detailed experimental settings. Finally, we report the computational infrastructure in Appendix~\ref{sec:infra}.

\section{Discussion}
\label{sec:discussion}

\textbf{Laplace Approximation.} We are interested in approximating  a continuous distribution $\mu$ by an Gaussian $\mathcal{N}(m,\sigma^2)$. We assume that we know the pdf of $\mu$ which is doubly differentiable and referred to as $f(x)$. We first find $m$ such that $f'(m)=0$. After, we use the second-order Taylor expansion of $\log f(x)$ around $m$:
\begin{align*}
    \log f(x)\approx \log f(m) - \frac{1}{2\sigma^2}(x-m)^2,
\end{align*}
where the first order  does not appear since $f'(m)=0$ and  $\sigma^2 = \frac{-1}{\frac{d^2}{dx^2} \log f(x) \big{|}_{x=m}}$.

\textbf{Sampling-based approximation.} We are interested in approximating  a continuous distribution $\mu$ by an Gaussian $\mathcal{N}(m,\sigma^2)$. Here, we assume that we do not know the pdf of $\mu$, however, we can sample from $X_1,\ldots, X_n \overset{i.i.d}{\sim}\mu$.   Therefore, we can approximate $\mu$ by $\mathcal{N}(m,\sigma^2)$ with $m=\frac{1}{n}\sum_{i=1}^n X_i$ and $\sigma^2 =  \frac{1}{n}\sum_{i=1}^n (X_i-m)^2$ which is equivalent to doing maximum likelihood estimate, moment matching, and doing optimization in Equation~\ref{eq:approx1} with the proxy empirical measure $\mu_n=\frac{1}{n}\sum_{i=1}^n \delta_{X_i}$.

\section{Non-parametric hypothesis testing}
We follow the non-parametric two-sample test in~\citep{wang2022two}. We use the permutation test for two cases. The first case is with two measures as Gaussians with diagonal covariance matrix and the second case is with two measures as Gaussians with full covariance matrix. We use the same hyper-parameters setting as in Section 5.1 in~\citep{wang2022two} and use SW, LCV-SW, and UCV-SW with $L=100$ as the testing distance. We obtain the result in Figure~\ref{fig:G_test}.

\begin{figure}[!h]
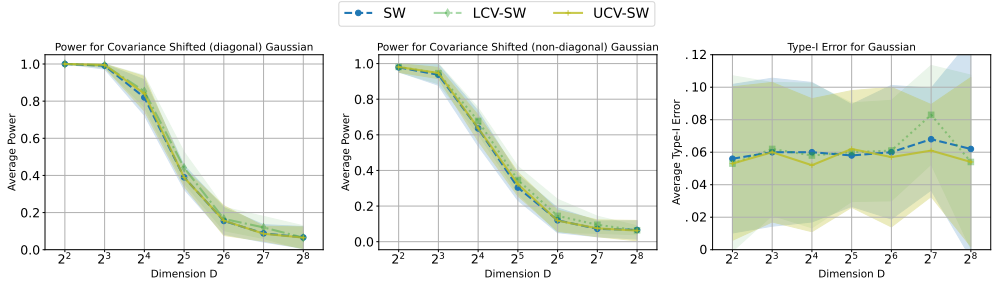

\begin{center}
    
  \begin{tabular}{c}
  \widgraph{1\textwidth}{images/Exp_Gaussian_Dimension_summary}

  \end{tabular}
  \end{center}
  \vskip -0.1in
  \caption{
  \footnotesize{ Testing results on Gaussian distributions across different choices of dimension D. Left: power
for Gaussian distributions, where the shifted covariance matrix is still diagonal; Middle: power for Gaussian
distributions, where the shifted covariance matrix is non-diagonal; Right: Type-I error.
}
} 
  \label{fig:G_test}
   \vskip -0.1in
\end{figure}

\section{Proofs}
\label{sec:proofs}

\subsection{Proof of Proposition~\ref{prop:Gaussianapproximation}}
\label{subsec:prop:Gaussianapproximation}
 We have $\mu$ and $\nu$ are two empirical measures i.e., $\mu= \sum_{i=1}^n \alpha_i \delta_{x_i}$ ($\sum_{i=1}^n\alpha_i=1 $) and $\nu= \sum_{i=1}^m \beta_i\delta_{y_i}$ ($\sum_{i=1}^m\beta_i=1 $), we have their projected measures  $\theta \sharp \mu =   \sum_{i=1}^n \alpha_i\delta_{\theta^\top x_i}$ and $\theta \sharp \nu =  \sum_{i=1}^n \beta_i\delta_{\theta^\top y_i}$.  Now, we have:
\begin{align*}
    & \text{argmax}_{m_1,\sigma_1^2}\left[ \sum_{i=1}^n \alpha_i\log \left(\frac{1}{\sqrt{2\pi \sigma_1^2}} \exp \left(-\frac{1}{2 \sigma_1^2} (\theta^\top x_i - m_1)^2 \right)\right)\right] \\
    &=\text{argmax}_{m_1,\sigma_1^2}\left[ \sum_{i=1}^n \left(-\frac{\alpha_i}{2 \sigma_1^2} (\theta^\top x_i - m_1)^2  -\frac{\alpha_i}{2}\log(\sigma_1^2)\right)\right] \\
    &=\text{argmax}_{m_1,\sigma_1^2} f(m_1,\sigma_1^2).
\end{align*}
Taking the derivatives and setting them to 0, we have:

\begin{align*}
    &\frac{d }{dm_1}f(m_1,\sigma_1^2) = \sum_{i=1}^n \frac{\alpha_i}{\sigma_1^2}(m_1-\theta^\top x_i)= \frac{1}{\sigma_1^2}\left(\sum_{i=1}^n \alpha_i m_1 -\sum_{i=1}^n\alpha_i\theta^\top x_i\right)  =0,\\
     &\frac{d }{d\sigma_1^2}f(m_1,\sigma_1^2) = \sum_{i=1}^n \left(\frac{\alpha_i}{2\sigma_1^4}(\theta^\top x_i -m_1)^2 -\frac{\alpha_i}{2\sigma_1^2}\right) = \frac{1}{2\sigma_1^2}  
     \left(\sum_{i=1}^n \frac{\alpha_i}{\sigma_1^2}(\theta^\top x_i-m_1)^2 -\sum_{i=1}^n \alpha_i \right)=0.
\end{align*}
Hence, we obtain:
\begin{align*}
    &m_1 = \frac{\sum_{i=1}^n\alpha_i \theta^\top x_i }{\sum_{i=1}^n \alpha_i}= \sum_{i=1}^n\alpha_i \theta^\top x_i,\\
    &\sigma_1^2 =  \frac{\sum_{i=1}^n \alpha_i(\theta^\top x_i -m_1)^2}{\sum_{i=1}^n \alpha_i } = \sum_{i=1}^n \alpha_i(\theta^\top x_i -m_1)^2.
\end{align*}
With similar derivation, we obtain the result for $m_2,\sigma_2^2$ and complete the proof.

\subsection{Proof of Proposition~\ref{prop:calcuating}}
\label{subsec:prop:calcuating}
We first prove the following lemma.
\begin{lemma}
    \label{lemma:expectation} Let $\theta$ is a uniformely distributed vector on the unit-hypersphere i.e., $\theta \sim \mathcal{U}(\mathbb{S}^{d-1})$, we have  $\mathbb{E}[\theta \theta^\top] = \frac{1}{d}\mathbf{I}_d$.
\end{lemma}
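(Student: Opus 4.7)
The plan is to exploit the rotational symmetry of the uniform distribution on $\mathbb{S}^{d-1}$. Let $M := \mathbb{E}[\theta\theta^\top]$. For any orthogonal matrix $Q \in \mathrm{O}(d)$, the pushforward of $\mathcal{U}(\mathbb{S}^{d-1})$ under $\theta \mapsto Q\theta$ is again $\mathcal{U}(\mathbb{S}^{d-1})$. Therefore $Q M Q^\top = \mathbb{E}[(Q\theta)(Q\theta)^\top] = M$, i.e.\ $QM = MQ$ for every $Q \in \mathrm{O}(d)$. By (a standard form of) Schur's lemma, the only matrices commuting with all orthogonal matrices are scalar multiples of the identity, so $M = c\, \mathbf{I}_d$ for some $c \in \mathbb{R}$.

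To pin down $c$, I would take the trace of both sides:
\begin{align*}
c\, d \;=\; \mathrm{Tr}(M) \;=\; \mathbb{E}[\mathrm{Tr}(\theta\theta^\top)] \;=\; \mathbb{E}[\theta^\top\theta] \;=\; \mathbb{E}[\|\theta\|_2^2] \;=\; 1,
\end{align*}
since $\theta \in \mathbb{S}^{d-1}$ almost surely. Hence $c = 1/d$ and $M = \frac{1}{d}\mathbf{I}_d$.

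If I wanted to avoid invoking Schur's lemma, an equivalent elementary route is to represent $\theta = g/\|g\|_2$ with $g \sim \mathcal{N}(0, \mathbf{I}_d)$. The density of $g$ is invariant under sign flips of any coordinate and under permutations of coordinates, so $\mathbb{E}[g_i g_j / \|g\|_2^2] = 0$ for $i \neq j$ and $\mathbb{E}[g_i^2/\|g\|_2^2]$ is the same value $c$ for all $i$. Summing over $i$ gives $\sum_{i=1}^d \mathbb{E}[g_i^2/\|g\|_2^2] = \mathbb{E}[\|g\|_2^2/\|g\|_2^2] = 1$, so again $c = 1/d$.

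There is no real obstacle here; the only subtlety is making sure the symmetry argument is invoked cleanly (either via invariance-plus-Schur or via the Gaussian representation). I would present the first route since it is the shortest and most transparent.
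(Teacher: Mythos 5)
Your proposal is correct. Your ``elementary alternative'' is in fact essentially the paper's own proof: the paper represents $\theta = z/\|z\|_2$ with $z \sim \mathcal{N}(0,\mathbf{I}_d)$, kills the off-diagonal entries by a sign-flip symmetry (mapping $z_i \mapsto -z_i$ preserves the Gaussian law, so $\mathbb{E}[\theta_i\theta_j] = -\mathbb{E}[\theta_i\theta_j] = 0$ for $i\neq j$), and uses exchangeability of the coordinates plus $\sum_i \theta_i^2 = 1$ to conclude each diagonal entry is $1/d$. Your preferred primary route --- orthogonal invariance of $\mathcal{U}(\mathbb{S}^{d-1})$ forcing $M = \mathbb{E}[\theta\theta^\top]$ to commute with all of $\mathrm{O}(d)$, hence $M = c\,\mathbf{I}_d$ by Schur's lemma, with $c$ fixed by the trace identity $\mathrm{Tr}(M) = \mathbb{E}[\|\theta\|_2^2] = 1$ --- is a genuinely different and somewhat slicker argument. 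It is more conceptual and generalizes immediately to any rotationally invariant law supported on the sphere (or, with the obvious modification of the trace computation, to any isotropic distribution on $\mathbb{R}^d$), at the cost of invoking a representation-theoretic fact; one can also shortcut Schur here by noting $M$ is symmetric positive semidefinite, so commuting with all rotations forces every eigenspace to be rotation-invariant, hence equal to all of $\mathbb{R}^d$. The paper's coordinate-symmetry argument buys self-containedness and avoids any appeal to group theory. Both are complete and correct.
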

\begin{proof}
    Since we can obtain a uniformly distributed vector $\theta$ on $\mathbb{S}^{d-1}$ by normalizing a unit Gaussian distributed vector i.e., $\theta = \frac{z}{||z||_2}$ with $z\sim \mathcal{N}(0,\mathbf{I}_d)$. Therefore, we can rewrite:
    \begin{align*}
        \mathbb{E}[\theta \theta^\top ] =  \mathbb{E}\left[\frac{z}{||z||_2} \frac{z}{||z||_2}^\top\right].
    \end{align*}
    Since the expectation of a random matrix is a matrix of expectation, we now need to calculate $\mathbb{E}[\theta_i \theta_j]$ or $\mathbb{E}\left[\frac{z_i}{||z||_2} \frac{z_j}{||z||_2}^\top\right]$ for all $i,j \in [[d]]$.

    Let denote $\theta=(\theta_1,\ldots,\theta_d)$ and $z=(z_1,\ldots,z_d)$, we have $\theta' =  (\theta_1,\ldots,-\theta_i,\ldots,\theta_d)$ (for any $i \in [[d]]:=\{1,2,\ldots,d\}$ ) also follow $\mathcal{U}(\mathbb{S}^{d-1})$ since $\theta'=\frac{z'}{||z'||_2}$ with $z'=(z_1,\ldots,-z_i,\ldots,z_d) \sim \mathcal{N}(0,\mathbf{I}_d)$ (linear mapping of a Gaussian is a Gaussian). Therefore, we have  $\mathbb{E}[\theta_i\theta_j] = -\mathbb{E}[\theta_i\theta_j]$ which means $\mathbb{E}[\theta_i\theta_j] =0$ for all $j \in[[d]] \neq i$. Now, we calculate the variance $\mathbb{E}[\theta_i^2]$ for all $i \in [[d]]$. We have
    \begin{align*}
        \sum_{i=1}^d \mathbb{E}[\theta_i^2] =  \sum_{i=1}^d\mathbb{E}\left[\frac{z_i^2}{||z||_2^2}\right]=\sum_{i=1}^d\mathbb{E}\left[\frac{z_i^2}{\sum_{j=1}^d z_j^2}\right] =1.
    \end{align*}
    Moreover, $\theta_1,\ldots,\theta_d$ are exchangeable with a similar argument with the symmetry and linear mapping property of Gaussian distribution. Therefore, they are indentically distributed i.e., $\mathbb{E}[\theta_i^2] =\mathbb{E}[\theta_j^2] =1/d$ for all $i,j \in [[d]]$. We complete the proof here.
\end{proof}

We now go back to the proof of Proposition~\ref{prop:calcuating}. From Proposition~\ref{prop:Gaussianapproximation}, we have:
\begin{align*}
&\mathbb{E}[(m_1(\theta;\mu)-m_2(\theta;\nu))^2] =  \mathbb{E}\left[ \left( \sum_{i=1}^n \alpha_i \theta^\top  x_i -  \sum_{i=1}^m \beta_i \theta^\top  y_i \right)^2\right] \\
&=  \left( \sum_{i=1}^n \alpha_i  x_i -  \sum_{i=1}^m \beta_i y_i\right)^\top \mathbb{E}\left[\theta \theta^\top \right]  \left( \sum_{i=1}^n \alpha_i  x_i -  \sum_{i=1}^m \beta_i y_i \right)= \frac{1}{d} \left\| \sum_{i=1}^n \alpha_i  x_i -  \sum_{i=1}^m \beta_i y_i\right\|^2,
\end{align*}
where we use the result of $\mathbb{E}_{\theta \sim \mathcal{U}(\mathbb{S}^{d-1})} [\theta \theta^\top]=\frac{1}{d}$ from Lemma~\ref{lemma:expectation}. Next, we have:
\begin{align*}
    &\mathbb{E}\left[\sigma_1^2(\theta;\mu)\right] = \mathbb{E}\left[ \sum_{i=1}^n \alpha_i\left(\theta^\top x_i -  \sum_{i'=1}^n \alpha_{i'} \theta^\top x_{i'} \right)^2 \right] \\
    &=  \sum_{i=1}^n \alpha_i\left( x_i -  \sum_{i'=1}^n \alpha_{i'}  x_{i'} \right)^\top \mathbb{E}\left[ \theta  \theta^\top \right] \left( x_i -  \sum_{i'=1}^n \alpha_{i'}  x_{i'} \right)  = \frac{1}{d} \sum_{i=1}^n \alpha_i \left \| x_i - \sum_{i'=1}^n \alpha_{i'}x_{i'} \right\|^2.
\end{align*}
Similarly, we have $\mathbb{E}\left[\sigma_2^2(\theta;\nu)\right]=\frac{1}{d} \sum_{i=1}^m \beta_i \left \| y_i - \sum_{i'=1}^m \beta_{i'} y_{i'} \right\|^2.$

\subsection{Proof of Proposition~\ref{prop:bound}}
\label{subsec:prop:bound}
We recall that:
\begin{align*}
    &C_{low}(\theta;\mu,\nu) = (m_1(\theta;\mu)-m_2(\theta;\nu))^2, \\
    &C_{up}(\theta;\mu,\nu) = (m_1(\theta;\mu)-m_2(\theta;\nu))^2 +  \sigma_1(\theta;\mu)^2 + \sigma_2(\theta;\nu)^2,\\
    &\text{W}_2^2(\mathcal{N}(m_1(\theta;\mu),\sigma_1^2(\theta;\mu)),\mathcal{N}(m_2(\theta;\nu),\sigma_2^2(\theta;\nu))) =  (m_1(\theta;\mu)-m_2(\theta;\nu))^2 +  (\sigma_1(\theta;\mu) - \sigma_2(\theta;\nu))^2.
\end{align*}
Since $ 0 \leq (\sigma_1(\theta;\mu) - \sigma_2(\theta;\nu))^2 \leq \sigma_1(\theta;\mu)^2 + \sigma_2(\theta;\nu)^2$, we obtain the result by adding $(m_1(\theta;\mu)-m_2(\theta;\nu))^2$  to the inequalities.

\subsection{Proof of Proposition~\ref{prop:clow}}
\label{subsec:prop:clow}
 We have $\mu$ and $\nu$ are two empirical measures i.e., $\mu= \sum_{i=1}^n \alpha_i \delta_{x_i}$ ($\sum_{i=1}^n\alpha_i=1 $) and $\nu= \sum_{i=1}^m \beta_i\delta_{y_i}$ ($\sum_{i=1}^m\beta_i=1 $). Now, we have:
 \begin{align*}
     &\text{argmax}_{\mathbf{m_1},\sigma_1^2} \sum_{i=1}^n \alpha_i \log \left(\frac{1}{\sqrt{(2\pi)^d |\sigma_1^2 \mathbf{I}|}} \exp \left(-\frac{1}{2}(x_i -\mathbf{m_1})^\top |\sigma_1^2 \mathbf{I}|^{-1} (x_i -\mathbf{m_1})\right)\right) \\
     &=\text{argmax}_{\mathbf{m_1},\sigma_1^2} \sum_{i=1}^n \alpha_i \left( -\frac{1}{ 2\sigma_1^{2d}}(x_i -\mathbf{m_1})^\top (x_i -\mathbf{m_1}) - \frac{d}{2} \log (\sigma_1^{2}) \right)\\
     &=\text{argmax}_{\mathbf{m_1},\sigma_1^2}  f(\mathbf{m}_1,\sigma_1^2)
 \end{align*}
Taking the derivatives and setting them to 0, we have:

\begin{align*}
    &\nabla_\mathbf{m_1}f(\mathbf{m}_1,\sigma_1^2) =  \sum_{i=1}^n  \frac{\alpha_i}{\sigma_1^{2d}}(\mathbf{m_1}-x_i) = \frac{1}{\sigma_1^{2d}} \left(\sum_{i=1}^n \alpha_i \mathbf{m_1}- \alpha_i x_i\right)= 0,\\
    &\frac{d }{d\sigma_1^2}f(\mathbf{m_1},\sigma_1^2) =\sum_{i=1}^n \alpha_i\left( \frac{d}{2\sigma_1^{2d+2}}(x_i -\mathbf{m_1})^\top (x_i -\mathbf{m_1}) - \frac{d}{2 \sigma_1^{2}}\right),\\
    &\quad \quad \quad =  \frac{d}{d\sigma_1^2} \sum_{i=1}^n\left(\frac{\alpha_i \|x_i -\mathbf{m_1}\|_2^2}{\sigma_1^{2d}} -\alpha_i\right)=0.
\end{align*} 
Hence, we obtain:
\begin{align*}
    &\mathbf{m_1}= \frac{\sum_{i=1}^n \alpha_i x_i}{\sum_{i=1}^n \alpha_i}=\sum_{i=1}^n \alpha_i x_i,\\
    &\sigma_1^2 = \left(\frac{\sum_{i=1}^n \alpha_i\|x_i-\mathbf{m}_1\|_2^2}{\sum_{i=1}^n \alpha_i}\right)^{\frac{1}{d}} = \left(\sum_{i=1}^n \alpha_i\|x_i-\mathbf{m}_1\|_2^2\right)^{\frac{1}{d}}.
\end{align*}
Similarly, we obtain $\mathbf{m_2}= \sum_{i=1}^m \beta_i y_i$ and $\sigma_2^2 = \left(\sum_{i=1}^m \beta_i\|y_i-\mathbf{m}_2\|_2^2\right)^{\frac{1}{d}}$. 
\\
Using the linearity of Gaussian distributions, we have $\theta \sharp \mathcal{N}(\mathbf{m_1},\sigma_1^2 \mathbf{I}) =  \mathcal{N}(\theta^\top \mathbf{m_1}, \sigma_1^2 \theta^\top \mathbf{I} \theta) =\mathcal{N}(\theta^\top \mathbf{m_1}, \sigma_1^2) $ ($\theta^\top \theta=1$). Similarly, we obtain $\theta \sharp \mathcal{N}(\mathbf{m_2},\sigma_2^2 \mathbf{I}) = \mathcal{N}(\theta^\top \mathbf{m_2}, \sigma_2^2)$. Therefore, we have:
\begin{align*}
    \text{W}_2^2(\theta \sharp \mathcal{N}(\mathbf{m_1},\sigma_1^2\mathbf{I}),  \theta \sharp \mathcal{N}(\mathbf{m_2},\sigma_2^2\mathbf{I})) = (\theta^\top \mathbf{m_1} - \theta^\top \mathbf{m_2})^2 + (\sigma_1 - \sigma_2)^2.
\end{align*}
Calculating the expectation, we have:
\begin{align*}
    \mathbb{E}\left[\text{W}_2^2(\theta \sharp \mathcal{N}(\mathbf{m_1},\sigma_1^2\mathbf{I}),  \theta \sharp \mathcal{N}(\mathbf{m_2},\sigma_2^2\mathbf{I}))\right]&= (\mathbf{m_1}-\mathbf{m_2})^\top \mathbb{E}[\theta \theta^\top ](\mathbf{m_1}-\mathbf{m_2}) + (\sigma_1 - \sigma_2)^2 \\
    &= \frac{1}{d}\|\mathbf{m_1}-\mathbf{m_2}\|_2^2 + (\sigma_1 - \sigma_2)^2.
\end{align*}
Let denote $C(\theta) =  \text{W}_2^2(\theta \sharp \mathcal{N}(\mathbf{m_1},\sigma_1^2\mathbf{I}),  \theta \sharp \mathcal{N}(\mathbf{m_2},\sigma_2^2\mathbf{I}))$, we have:
\begin{align*}
    C(\theta)-\mathbb{E}[C(\theta)] &=   (\theta^\top \mathbf{m_1} - \theta^\top \mathbf{m_2})^2-\frac{1}{d}\|\mathbf{m_1}-\mathbf{m_2}\|_2^2 \\
    &= \left(\theta^\top \sum_{i=1}^n \alpha_i x_i - \theta^\top \sum_{i=1}^m \beta_i y_i\right)^2 - \frac{1}{d} \left\|\sum_{i=1}^n \alpha_i x_i -  \sum_{i=1}^m \beta_i y_i\right\|_2^2\\
    &= \left(\sum_{i=1}^n \alpha_i \theta^\top x_i -  \sum_{i=1}^m \beta_i \theta^\top y_i\right)^2- \frac{1}{d} \left\|\sum_{i=1}^n \alpha_i x_i -  \sum_{i=1}^m \beta_i y_i\right\|_2^2\\
    &= C_{low}(\theta;\mu,\nu) - \mathbb{E}[C_{low}(\theta;\mu,\nu)].
\end{align*}
Similarly, we obtain:
\begin{align*}
    &\text{Var}\left[C(\theta)\right] =  \mathbb{E}\left[ (C(\theta)- \mathbb{E}[C(\theta)])^2 \right] = \mathbb{E}\left[ (C_{low}(\theta;\mu,\nu)- \mathbb{E}[C_{low}(\theta;\mu,\nu)])^2 \right] =  \text{Var}[C_{low}(\theta;\mu,\nu)],\\
    &\text{Cov}\left[C(\theta),W(\theta;\mu,\nu)\right] = \mathbb{E}\left[ (C(\theta)-\mathbb{E}[C(\theta)])( W(\theta;\mu,\nu)-\mathbb{E}[W(\theta;\mu,\nu)])\right] \\
    &\quad \quad \quad = \mathbb{E}\left[ (C_{low}(\theta;\mu,\nu)-\mathbb{E}[C_{low}(\theta;\mu,\nu)])( W(\theta;\mu,\nu)-\mathbb{E}[W(\theta;\mu,\nu)])\right]\\
    &\quad \quad \quad =\text{Cov}\left[C_{low}(\theta;\mu,\nu),W(\theta;\mu,\nu)\right].
\end{align*}
Therefore, it is sufficient to claim that using the $C_{low}$ control variate is equivalent to use $\text{W}_2^2(\theta \sharp \mathcal{N}(\mathbf{m_1},\sigma_1^2\mathbf{I}),  \theta \sharp \mathcal{N}(\mathbf{m_2},\sigma_2^2\mathbf{I}))$ as the control variate.

\section{Algorithms}
\label{sec:algorithms}
We present the algorithm for computing the conventional Monte Carlo estimator of the sliced Wasserstein distance between two discrete measures in Algorithm~\ref{alg:SW}. Similarly, we provide the algorithms for the lower bound control variate estimator and the upper bound control variate estimator in Algorithm~\ref{alg:LCVSW} and in Algorithm~\ref{alg:UCVSW} respectively.
\begin{algorithm}[!t]
\caption{The conventional estimator of sliced Wasserstein distance.}
\begin{algorithmic}
\label{alg:SW}
\STATE \textbf{Input:} Probability measures $\mu = \sum_{i=1}^n \alpha_i \delta_{x_i}$ and $\nu=\sum_{i=1}^m \beta_i \delta_{y_i}$, $p\geq 1$, and the number of projections $L$.
\STATE Set $\widehat{\text{SW}}_p^p(\mu,\nu;L)=0$
  \FOR{$l=1$ to $L$}
  \STATE Sample $\theta_l \sim \mathcal{U}(\mathbb{S}^{d-1})$
  \STATE Compute $\widehat{\text{SW}}_p^p(\mu,\nu;L) = \widehat{\text{SW}}_p^p(\mu,\nu;L)+ \frac{1}{L} \int_0^1 |F_{\theta_l\sharp \mu}^{-1}(z) - F_{\theta_l \sharp \nu}^{-1}(z)|^{p} dz$
  \ENDFOR
 \STATE \textbf{Return:} $\widehat{\text{SW}}_p^p(\mu,\nu;L)$
\end{algorithmic}
\end{algorithm}

\begin{algorithm}[!t]
\caption{The lower bound control variate estimator of sliced Wasserstein distance.}
\begin{algorithmic}
\label{alg:LCVSW}
\STATE \textbf{Input:} Probability measures $\mu = \sum_{i=1}^n \alpha_i \delta_{x_i}$ and $\nu=\sum_{i=1}^m \beta_i \delta_{y_i}$, $p\geq 1$, and the number of projections $L$.
\STATE Set $\widehat{\text{SW}}_p^p(\mu,\nu;L)=0$
\STATE Compute $\bar{x} = \sum_{i=1}^n \alpha_i x_i$, and $\bar{y} = \sum_{i=1}^m \beta_i y_i$
  \FOR{$l=1$ to $L$}
  \STATE Sample $\theta_l \sim \mathcal{U}(\mathbb{S}^{d-1})$
  \STATE Compute $w_l = \int_0^1 |F_{\theta_l\sharp \mu}^{-1}(z) - F_{\theta_l \sharp \nu}^{-1}(z)|^{p} dz$ 
  \STATE Compute $c_l = (\theta^\top \bar{x} - \theta^\top \bar{y})^2 $
  \ENDFOR
  \STATE Compute $\widehat{\text{SW}}_p^p(\mu,\nu;L) = \frac{1}{L}\sum_{l=1}^L w_l$
  \STATE Compute $b =  \frac{1}{d}\|\bar{x}-\bar{y}\|_2^2$
  \STATE Compute $\gamma = \frac{\frac{1}{L}\sum_{l=1}^L(w_l - \widehat{\text{SW}}_p^p(\mu,\nu;L))(c_l-b)}{\frac{1}{L}\sum_{l=1}^L(c_l-b)^2}$
  \STATE Compute $\widehat{\text{LCV-SW}}_p^p(\mu,\nu;L)=\widehat{\text{SW}}_p^p(\mu,\nu;L) -\gamma \frac{1}{L}\sum_{l=1}^L(c_l-b) $
 \STATE \textbf{Return:} $\widehat{\text{LCV-SW}}_p^p(\mu,\nu;L)$
\end{algorithmic}
\end{algorithm}

\begin{algorithm}[!t]
\caption{The upper bound control variate estimator of sliced Wasserstein distance.}
\begin{algorithmic}
\label{alg:UCVSW}
\STATE \textbf{Input:} Probability measures $\mu = \sum_{i=1}^n \alpha_i \delta_{x_i}$ and $\nu=\sum_{i=1}^m \beta_i \delta_{y_i}$, $p\geq 1$, and the number of projections $L$.
\STATE Set $\widehat{\text{SW}}_p^p(\mu,\nu;L)=0$
\STATE Compute $\bar{x} = \sum_{i=1}^n \alpha_i x_i$, and $\bar{y} = \sum_{i=1}^m \beta_i y_i$
  \FOR{$l=1$ to $L$}
  \STATE Sample $\theta_l \sim \mathcal{U}(\mathbb{S}^{d-1})$
  \STATE Compute $w_l = \int_0^1 |F_{\theta_l\sharp \mu}^{-1}(z) - F_{\theta_l \sharp \nu}^{-1}(z)|^{p} dz$ 
  \STATE Compute $c_l = (\theta^\top \bar{x} - \theta^\top \bar{y})^2 + \sum_{i=1}^n \alpha_i(\theta^\top x_i-\theta^\top\bar{x})^2 +\sum_{i=1}^m \beta_i(\theta^\top y_i-\theta^\top\bar{y})^2  $
  \ENDFOR
  \STATE Compute $\widehat{\text{SW}}_p^p(\mu,\nu;L) = \frac{1}{L}\sum_{l=1}^L w_l$
  \STATE Compute $b =  \frac{1}{d}\|\bar{x}-\bar{y}\|_2^2 + \frac{1}{d}\sum_{i=1}^n \alpha_i\|(x_i-\bar{x})\|_2^2 + \frac{1}{d}\sum_{i=1}^m \beta_i\|(y_i-\bar{y})\|_2^2$
  \STATE Compute $\gamma = \frac{\frac{1}{L}\sum_{l=1}^L(w_l - \widehat{\text{SW}}_p^p(\mu,\nu;L))(c_l-b)}{\frac{1}{L}\sum_{l=1}^L(c_l-b)^2}$
  \STATE Compute $\widehat{\text{UCV-SW}}_p^p(\mu,\nu;L)=\widehat{\text{SW}}_p^p(\mu,\nu;L) -\gamma \frac{1}{L}\sum_{l=1}^L(c_l-b) $
 \STATE \textbf{Return:} $\widehat{\text{UCV-SW}}_p^p(\mu,\nu;L)$
\end{algorithmic}
\end{algorithm}

\section{Related Works}
\label{sec:relatedworks}
\textbf{Sliced Wasserstein variants with different projecting functions.} The conventional sliced Wasserstein is based on a linear projecting function i.e., the inner product to project measures to one dimension. In some special cases of probability measures, some other projecting functions might be preferred e.g., generalized sliced Wasserstein~\citep{kolouri2019generalized} distance with circular, polynomial projecting function, 
spherical sliced Wasserstein~\citep{bonet2023spherical} with geodesic spherical projecting function, 
and so on. Despite having different projecting functions, all mentioned sliced Wasserstein variants utilize random projecting directions that follow the uniform distribution over the unit hypersphere and are estimated using the Monte Carlo integration scheme. Therefore, we could directly adapt the proposed control variates in these variants.
 \begin{figure}[t]
\begin{center}
    
  \begin{tabular}{cc}
  \widgraph{1\textwidth}{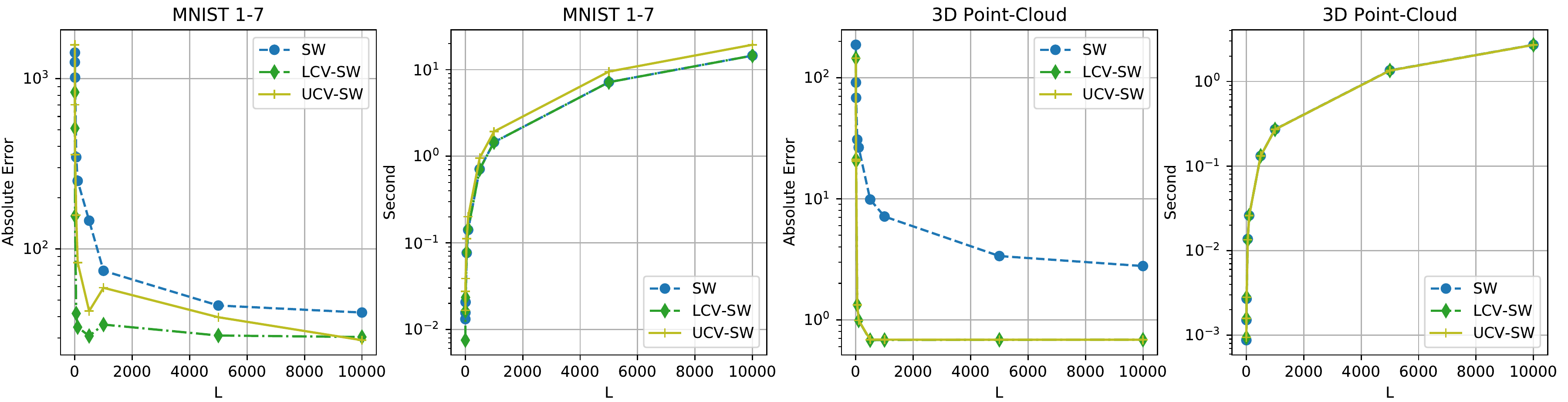}

  \end{tabular}
  \end{center}
  \vskip -0.1in
  \caption{
  \footnotesize{The empirical errors of the conventional estimator (SW) and the control variate estimators (LCV-SW, UCV-SW) when comparing empirical distributions over MNIST images and point-clouds.
}
} 
  \label{fig:variance_2}
   \vskip -0.1in
\end{figure}

\textbf{Applications of the control variate estimators.} Since the proposed control variates are used to estimate the SW distance, they can be applied to all applications where sliced Wasserstein exists. We would like to mention some other applications such as domain adaptation~\citep{lee2019sliced}, approximate Bayesian computation~\citep{nadjahi2020approximate}, color transfer~\citep{li2022hilbert}, point-cloud reconstruction~\citep{nguyen2024quasimonte},  mesh deformation~\cite{le2024diffeomorphic},  diffusion models~\citep{nguyen2024rpsw}, and many other tasks.
\section{Additional Experiments}
\label{sec:add_exp}
In this section, we provide some additional experiments for applications in the main text. In particular, we calculate the empirical variances of the conventional estimator and the control variate estimators on different images of digits and point-clouds in Appendix~\ref{subsec:add_comparing}. We provide the point-cloud gradient flow between two new point-cloud in Appendix~\ref{subsec:add_gradient_flows}. We then provide the detailed training of deep generative models and additional generated images in Appendix~\ref{subsec:add_dgm}.
\subsection{Comparing empirical probability measures over images and point-clouds}
\label{subsec:add_comparing}

\begin{table}[!t]
    \centering
    \caption{\footnotesize{Estimated variances of the conventional estimator's and the control variate estimators. }}
    \scalebox{1}{
    \begin{tabular}{lcccc}
    \toprule
    Estimator & MNIST 0-1 &MNIST 1-7 & Point-cloud-1 & Point-cloud-2 
    \\
    \midrule
    SW&$4700.87\pm 59.12$ &$1205.62\pm14.17$ &$12.78\pm0.025$&$12.79\pm0.026$ \\
    LCV-SW&$\mathbf{0.045\pm0.008}$&$\mathbf{0.0017\pm0.001}$&$\mathbf{0.0025\pm0.0000}$&$\mathbf{0.0021\pm0.0000}$\\
    UCV-SW&$0.061\pm 0.016$&$0.0018\pm 0.0001$&$\mathbf{0.0025\pm0.0000}$&$\mathbf{0.0021\pm0.0000}$\\
    \bottomrule
    \end{tabular}
    }
    \label{tab:variace}
    \vskip -0.1in
\end{table}
\textbf{Settings.} We follow the same settings which are used in the main text. However, for MNIST, we compare probability measures over digit 1 and digit 7. For point-clouds, we compare two different point-clouds from the main text. We report the estimated variances of $W(\theta;\mu,\nu),Z_{low}(\theta;\mu,\nu),Z_{up}(\theta;\mu,\nu)$ defined in Definition~\ref{def:controlled1DW} in Table~\ref{tab:variace}.  We use a large number of samples e.g., 100000 Monte Carlo samples. In the table, point-cloud-1 denotes the pair in Figure~\ref{fig:flow} and point-cloud-2 denotes the pair in Figure~\ref{fig:flow2}.

\textbf{Results.} We show the estimated errors and the corresponding computational time in Figure~\ref{fig:variance_2}. From the figure, we observe the same phenomenon as in the main text. In particular, the control variate estimators reduce considerably the errors in both cases while having approximately the same computational time.

\subsection{Point-cloud Gradient Flows}
\label{subsec:add_gradient_flows}
\begin{table}[!t]
    \centering
    \caption{\footnotesize{Summary of Wasserstein-2 scores (multiplied by $10^4$) from 3 different runs, computational time in second (s) to reach step 500 of different sliced Wasserstein variants in gradient flows. }}
    \scalebox{0.65}{
    \begin{tabular}{lcccccc}
    \toprule
    Distances &  Step 3000  ($\text{W}_2\downarrow$)& Step 4000 ($\text{W}_2\downarrow$)& Step 5000  ($\text{W}_2\downarrow$)& Step 6000($\text{W}_2\downarrow$) & Step 8000 ($\text{W}_2\downarrow$)& Time (s $\downarrow$)
    \\
    \midrule
    SW L=10  &$305.3907\pm0.4919$ &$137.7762\pm0.3630$&$36.1807\pm0.1383$&$0.1054\pm0.0022$& $2.3e-5\pm1.0e-5$&$\mathbf{26.30\pm0.03}$\\
    LCV-SW L=10 &$302.9718\pm0.1788$&$\mathbf{135.9132\pm0.0922}$&$\mathbf{35.0292\pm0.1457}$&$0.0452\pm0.0045$&$\mathbf{1.7e-5\pm0.3e-5}$&$27.65\pm 0.01$\\
    UCV-SW L=10 &$\mathbf{302.9717\pm0.1788}$&$\mathbf{135.9132\pm0.0922}$&$35.0295\pm0.1458$&$\mathbf{0.0446\pm0.0038}$&$2.0e-5\pm0.4e-5$&$29.56\pm 0.01$ \\
    \midrule
    SW L=100&$300.6303\pm0.2375$&$134.0492\pm0.3146$&$33.8608\pm0.1348$&$0.0121\pm0.0010$&$1.6e-5\pm0.2e-5$&$\mathbf{222.06\pm1.34}$ \\
    LCV-SW L=100&$300.2362\pm0.0054$&$133.5238\pm0.0065$&$33.4460\pm0.0030$&$0.0084\pm5.8e-5$&$\mathbf{1.4e-5\pm0.1e-5}$&$223.79\pm0.82$\\
    UCV-SW L=100&$\mathbf{300.2631\pm0.0054}$&$\mathbf{133.5237\pm0.0065}$&$\mathbf{33.4459\pm0.0030}$&$\mathbf{0.0083\pm8.5e-5}$&$1.6e-5\pm0.1e-5$&$235.29\pm1.65$\\
    \bottomrule
    \end{tabular}
    }
    \label{tab:gf_appendix}
    \vskip -0.1in
\end{table}
\begin{figure}[!t]
\begin{center}
    
  \begin{tabular}{cc}
  \widgraph{1\textwidth}{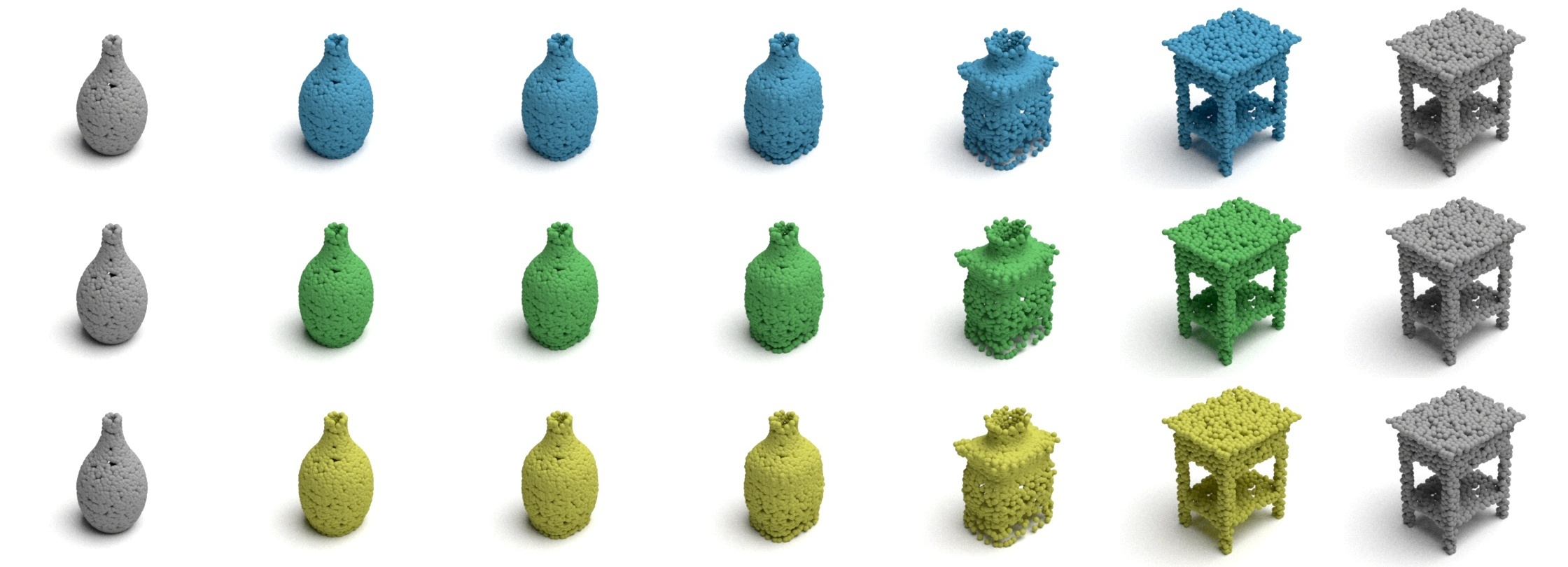}

  \end{tabular}
  \end{center}
  \vskip -0.1in
  \caption{
  \footnotesize{Point-Cloud gradient flows for $L=10$ from SW, LCV-SW,  and UCV-SW.
}
} 
  \label{fig:flow2}
   \vskip -0.1in
\end{figure}
\textbf{Settings.} We follow the same settings which are used in the main text. However, we use different point-clouds which are also used in Appendix~\ref{subsec:add_comparing}.

\textbf{Results.} We show the quantitative results in Table~\ref{tab:gf_appendix} and the corresponding qualitative result in Figure~\ref{fig:flow2}. Overall, we observe the same phenomenon as in the main text. In particular, the control variate estimators i.e., LCV-SW, UCV-SW  help to drive the flows to converge faster to the target point-cloud than the conventional estimator i.e., SW. It is worth noting that the computational time of the control variate estimators is only slightly higher than the conventional estimator for both settings of the number of projections $L=10,100$.

\subsection{Deep Generative Modeling}
\label{subsec:add_dgm}

\begin{figure*}[!t]
\begin{center}
    
  \begin{tabular}{ccc}
  \widgraph{0.32\textwidth}{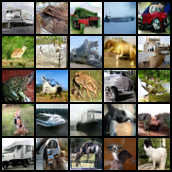} 
  &
\widgraph{0.32\textwidth}{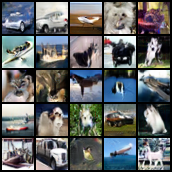} 
&
\widgraph{0.32\textwidth}{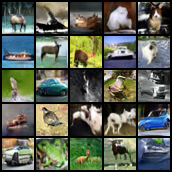}
\\
\widgraph{0.32\textwidth}{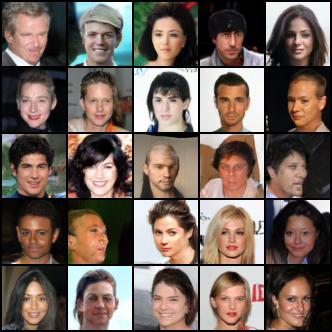} 
  &
\widgraph{0.32\textwidth}{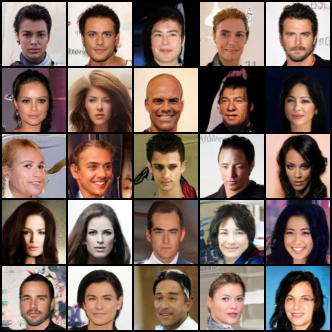} 
&
\widgraph{0.32\textwidth}{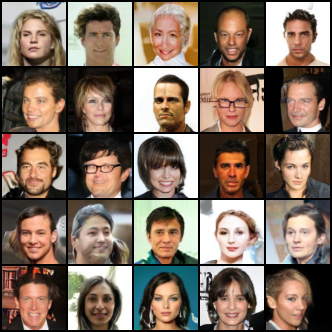} 
\\
SW L=1000 & LCV-SW L=1000 &UCV-SW L=1000 
  \end{tabular}
  \end{center}
  \vskip -0.1in
  \caption{
  \footnotesize{Random generated images of distances on CIFAR10 and CelebA.
}
} 
  \label{fig:gen_images_appendix}
   \vskip -0.1in
\end{figure*}

\textbf{Setting.} We denote $\mu$ as our data distribution. We then design the model distribution $\nu_\phi$ as a push forward probability measure that is created by pushing a unit multivariate Gaussian ($\epsilon$) through a neural network $G_\phi$ that maps from the realization of the noise to the data space.  We use a second neural network $T_\beta$ that maps from data space to a single scalar. We denote $T_{\beta_1}$ is the sub neural network of $T_\beta$ that maps from the data space to a feature space (output of the last Resnet block), and $T_{\beta_2}$ that maps from the feature space (image of $T_{\beta_1}$) to a single scalar. More precisely, $T_\beta = T_{\beta_2} \circ T_{\beta_1}$. We use the following neural networks for $G_\phi$ and $T_\beta$:
\begin{itemize}
    \item \textbf{CIFAR10}:
    \begin{itemize}
        \item $G_\phi$: $z \in \mathbb{R}^{128}( \sim \epsilon: \mathcal{N}(0,1)) \to 4 \times 4 \times 256 ( \text{Dense, Linear}) \to \text { ResBlock up } 256 \to \text { ResBlock up } 256 \to \text { ResBlock up } 256 \to \text { BN, ReLU, } \to 3\times 3 \text { conv, } 3 \text { Tanh }$.
        \item $T_{\beta_1}$: $x \in[-1,1]^{32 \times 32 \times 3} \to \text { ResBlock down } 128 \to \text { ResBlock down } 128 \to \text { ResBlock down } 128 \to \text { ResBlock } 128 \to \text { ResBlock } 128$.
        \item $T_{\beta_2}$: $\boldsymbol{x} \in \mathbb{R}^{128 \times 8 \times 8} \to \text{ReLU} \to \text{Global sum pooling} (128) \to 1 (\text{Spectral normalization})$. 
        \item   $T_{\beta}(x) =T_{\beta_2}(T_{\beta_1}(x)) $.
    \end{itemize}
    \item \textbf{CelebA:}
    \begin{itemize}
        \item $G_\phi$: $z \in \mathbb{R}^{128}( \sim \epsilon:\mathcal{N}(0,1)) \to 4 \times 4 \times 256 (\text{Dense, Linear}) \to \text { ResBlock up } 256 \to \text { ResBlock up } 256\to \text { ResBlock up } 256 \to \text { ResBlock up } 256 \to \text { BN, ReLU, } \to 3\times 3 \text { conv, } 3 \text { Tanh }$.
        \item $T_{\beta_1}$: $\boldsymbol{x} \in[-1,1]^{32 \times 32 \times 3} \to \text { ResBlock down } 128 \to \text { ResBlock down } 128 \to \text { ResBlock down } 128 \to \text { ResBlock } 128 \to \text { ResBlock } 128$.
        \item $T_{\beta_2}$: $\boldsymbol{x} \in \mathbb{R}^{128 \times 8 \times 8} \to \text{ReLU} \to \text{Global sum pooling} (128) \to 1 (\text{Spectral normalization})$.
        \item   $T_\beta(x) =T_{\beta_2}(T_{\beta_1}(x)) $.
    \end{itemize}
\end{itemize}
We use the following bi-optimization problem to train our neural networks:
\begin{align*}
    &\min_{\beta_1,\beta_2} \left(\mathbb{E}_{x \sim \mu} [\min (0,-1+ T_{\beta}(x))] + \mathbb{E}_{z \sim \epsilon} [\min(0, -1-T_\beta (G_\phi(z)))] \right), \\
    &\min_{\phi} \mathbb{E}_{X \sim \mu^{\otimes m}, Z \sim \epsilon^{\otimes m}} [\mathcal{S}(\tilde{T}_{\beta_1,\beta_2} \sharp P_X, \tilde{T}_{\beta_1,\beta_2}\sharp G_\phi \sharp P_Z)],
\end{align*}
where the function $\tilde{T}_{\beta_1,\beta_2} = [T_{\beta_1}(x), T_{\beta_2}(T_{\beta_1}(x))]$ which is the concatenation vector of $T_{\beta_1}(x)$ and $T_{\beta_2}(T_{\beta_1}(x))$, $\mathcal{S}$ is an estimator of the sliced Wasserstein distance. The number of training iterations is set to 100000 on CIFAR10 and 50000 in CelebA.  We update the generator $G_\phi$ every 5 iterations and we update the feature function $T_\beta$ every iteration. The mini-batch size $m$ is set to $128$ in all datasets. We use the Adam~\citep{kingma2014adam} optimizer with parameters $(\beta_1,\beta_2)=(0,0.9)$ for both $G_\phi$ and $T_\beta$ with the learning rate  $0.0002$. We use 50000 random samples from estimated generative models $G_\phi$ for computing the FID scores and the Inception scores. In evaluating FID scores, we use all training samples for computing statistics of datasets.

\textbf{Results.} In addition to the result in the main text, we provide generated images from the conventional estimator (SW) and the control variate estimator (LCV-SW and UCV-SW) with the number of projections $L=1000$ in Figure~\ref{fig:gen_images_appendix}. Overall, we see that by increasing the number of projections, the generated images are visually improved for all estimators. This result is consistent with the FID scores and the IS scores in Table~\ref{tab:summary}.

\section{Computational Infrastructure}
\label{sec:infra}

For comparing empirical probability measures over images and point-cloud application, and the point-cloud gradient flows application, we use a Macbook Pro M1 for conducting experiments. For deep generative modeling, experiments are run on a single NVIDIA V100 GPU.

\end{document}